\pdfoutput=1
\documentclass[11pt]{article}
\usepackage[final]{acl}
\usepackage{times}
\usepackage{latexsym}
\usepackage[T1]{fontenc}
\usepackage[utf8]{inputenc}
\usepackage{microtype}
\usepackage{inconsolata}
\usepackage{booktabs}
\usepackage{colortbl}
\usepackage{stfloats}
\usepackage{multirow}
\usepackage{amsmath,amssymb}
\usepackage{amsthm}
\usepackage{stmaryrd}
\newcommand{\sem}[1]{\llbracket #1 \rrbracket}
\usepackage{enumitem}
\newtheorem{theorem}{Theorem}[section]

\usepackage{tikz}
\usepackage{graphicx}
\usetikzlibrary{arrows.meta,shadings}

\title{Principles of Concept Representation in Sentence Encoders}

\author{
 \textbf{Isabelle Mohr\textsuperscript{1,2}},
 \textbf{John Dujany\textsuperscript{2}},
 \textbf{Jonathan Souquet\textsuperscript{2}},
 \textbf{Andre Freitas\textsuperscript{1}},
\\
 \textsuperscript{1}Idiap Research Institute,
 \textsuperscript{2}Merck KGaA,
\\
 \small{
   \textbf{Correspondence:} \href{mailto:isabelle.mohr@idiap.ch}{isabelle.mohr@idiap.ch}
 }
}
\begin{document}
\maketitle

\begin{abstract}
What makes a sentence encoder produce good concept representations?
We approach this through the lens of representational compositionality: an encoder supports a concept family only when its latent space admits a low-distortion realization of the corresponding semantic operator.
This framing predicts both where current encoders succeed and where they are structurally mismatched to their supervision.
Through a controlled ablation over encoder conditions trained on 3.3 million synonym and definition pairs from WordNet and Wiktionary, evaluated on three decontaminated splits and a modifier-labeled noun-phrase benchmark, we identify four principles.
Fine-tuning recalibrates the latent geometry rather than expanding it~(P1).
Semantic signal concentrates in the final transformer layer before concept-specific training begins, making cross-layer pooling redundant~(P2).
Hard negatives improve discrimination and stress-test robustness without improving retrieval ranking, showing that calibration and ranking are independently addressable~(P3).
Finally, the effectiveness of supervision depends on the composition type of the target concept. Extensional training helps intersective and subsective families while degrading relational and intensional ones, exposing a structural limitation of current training paradigms~(P4).
We release two new evaluation datasets: a DBpedia semantic-gap benchmark and a modifier-labeled NP paraphrase suite.
\end{abstract}

\colorlet{tC}{teal!80!black}      
\colorlet{nC}{orange!80!red}      
\colorlet{rC}{blue!65!black}      
\colorlet{xC}{red!58!black}       

\begin{figure*}[t]
\centering
\resizebox{\linewidth}{!}{%
\begin{tikzpicture}[font=\small, >=Latex, yscale=0.82]

\begin{scope}

\draw[gray!28, fill=gray!4, rounded corners=10pt]
  (-0.3,-0.3) rectangle (6.2,7.2);
\node[font=\small\bfseries] at (2.95,6.88) {Frozen Encoder (B0)};
\node[font=\scriptsize, gray!65] at (2.95,6.47) {anisotropy 0.126};

\fill[gray!10] (2.95,3.35) ellipse (2.6 and 2.5);
\draw[gray!25, line width=0.5pt] (2.95,3.35) ellipse (2.6 and 2.5);

\fill[rC] (0.72,4.82) circle (3pt);
\node[rC, font=\scriptsize, align=right, anchor=east] at (0.58,4.82)
  {\textit{``where someone}\\\textit{drew last breath''}};

\fill[rC] (5.52,1.98) circle (3pt);
\node[rC, font=\scriptsize, anchor=west] at (5.65,1.98) {\texttt{deathPlace}};

\draw[<->, dashed, rC!48, line width=0.85pt]
  (1.08,4.62) to[bend right=8] (5.32,2.08);
\node[rC!60, font=\scriptsize] at (2.95,3.52) {\textit{semantic gap}};

\fill[tC] (4.88,5.48) circle (3pt);
\node[tC, font=\scriptsize, anchor=south] at (4.88,5.60) {\textit{dirty cup}};

\fill[tC] (5.55,5.02) circle (3pt);
\node[tC, font=\scriptsize, anchor=west] at (5.68,5.02) {\textit{filthy cup}};

\fill[tC!62] (0.72,5.92) circle (3pt);
\node[tC!72, font=\scriptsize, anchor=south] at (0.72,6.04)
  {\textit{not clean cup}};

\draw[<->, dashed, tC!42, line width=0.7pt]
  (1.06,5.89) to[bend left=15] (4.59,5.52);
\node[tC!55, font=\tiny] at (2.88,6.13) {\textit{surface form splits}};

\fill[nC] (2.28,1.28) circle (3pt);
\node[nC, font=\scriptsize, anchor=north] at (2.08,1.15) {\textit{a mammal}};

\fill[nC!62] (3.08,1.38) circle (3pt);
\node[nC!72, font=\scriptsize, anchor=north] at (3.28,1.25)
  {\textit{not a mammal}};

\draw[nC!38, line width=0.6pt] (2.45,1.32) -- (2.91,1.36);
\node[nC!65, font=\tiny] at (2.68,1.52) {\textit{indistinguishable}};

\end{scope}

\draw[->, line width=1.8pt, gray!38] (6.42,3.35) -- (7.73,3.35);

\begin{scope}[xshift=8.0cm]

\draw[gray!28, fill=gray!4, rounded corners=10pt]
  (-0.3,-0.3) rectangle (6.2,7.2);
\node[font=\small\bfseries] at (2.95,6.88)
  {Concept-Equivalence Supervision};
\node[font=\scriptsize, gray!65] at (2.95,6.45)
  {InfoNCE + 0.5\,BCE $\cdot$ 3.3M pairs};

\draw[gray!18, line width=0.4pt] (0.08,4.32) -- (6.08,4.32);
\draw[gray!18, line width=0.4pt] (0.08,2.50) -- (6.08,2.50);

\node[font=\tiny\bfseries, tC!82, anchor=east] at (0.06,5.52) {syn};

\node[draw=tC!48, fill=tC!10, rounded corners=3pt,
      font=\scriptsize, minimum width=1.55cm, align=center, inner sep=3.5pt]
  (synL) at (1.22,5.52) {\textit{dirty cup}};

\node[draw=tC!48, fill=tC!10, rounded corners=3pt,
      font=\scriptsize, minimum width=1.6cm, align=center, inner sep=3.5pt]
  (synR) at (4.82,5.52) {\textit{not clean cup}};

\draw[<->, tC!68, line width=1.25pt] (synL.east) -- (synR.west);
\node[font=\tiny, tC!85, anchor=south] at (3.02,5.60) {$\uparrow$\,sim};

\node[font=\tiny\bfseries, rC!82, anchor=east] at (0.06,3.42) {t2d};

\node[draw=rC!42, fill=rC!8, rounded corners=3pt,
      font=\scriptsize, minimum width=1.55cm, align=center, inner sep=3.5pt]
  (t2dL) at (1.22,3.42) {\texttt{deathPlace}};

\node[draw=rC!42, fill=rC!8, rounded corners=3pt,
      font=\scriptsize, text width=2.0cm, align=center, inner sep=3.5pt]
  (t2dR) at (4.88,3.42)
  {\textit{``where someone\\drew last breath''}};

\draw[<->, rC!62, line width=1.25pt] (t2dL.east) -- (t2dR.west);
\node[font=\tiny, rC!82, anchor=south] at (3.02,3.50) {$\uparrow$\,sim};

\node[font=\tiny\bfseries, nC!82, anchor=east] at (0.06,1.50) {hn};

\node[draw=nC!42, fill=nC!8, rounded corners=3pt,
      font=\scriptsize, minimum width=1.55cm, align=center, inner sep=3.5pt]
  (hnL) at (1.22,1.50) {\textit{a mammal}};

\node[draw=nC!42, fill=nC!8, rounded corners=3pt,
      font=\scriptsize, minimum width=1.6cm, align=center, inner sep=3.5pt]
  (hnR) at (4.82,1.50) {\textit{not a mammal}};

\draw[<->, xC!65, dashed, line width=1.1pt] (hnL.east) -- (hnR.west);
\node[font=\tiny, xC!82, anchor=south] at (3.02,1.58) {$\downarrow$\,sim};

\draw[gray!55, <->, line width=1.1pt] (0.15,0.25) -- (0.65,0.25);
\node[font=\tiny, gray!68, anchor=west] at (0.72,0.25)
  {pull together (equiv.\ pairs)};

\draw[xC!58, <->, dashed, line width=1.0pt] (0.15,0.03) -- (0.65,0.03);
\node[font=\tiny, gray!68, anchor=west] at (0.72,0.03)
  {push apart (hard negatives)};

\end{scope}

\draw[->, line width=1.8pt, gray!38] (14.42,3.35) -- (15.80,3.35);
\node[font=\scriptsize, gray!60, align=center] at (15.11,4.05)
  {reshapes\\geometry};

\begin{scope}[xshift=16.1cm]

\draw[gray!28, fill=gray!4, rounded corners=10pt]
  (-0.3,-0.3) rectangle (6.2,7.2);
\node[font=\small\bfseries] at (2.95,6.88) {Fine-tuned Encoder (B1)};
\node[font=\scriptsize, gray!65] at (2.95,6.47) {anisotropy 0.012};

\fill[gray!10] (2.95,3.35) ellipse (2.45 and 2.5);
\draw[gray!25, line width=0.5pt] (2.95,3.35) ellipse (2.45 and 2.5);

\fill[rC] (1.88,1.88) circle (3pt);
\node[rC, font=\scriptsize, align=center, anchor=north] at (1.88,1.72)
  {\textit{``where someone}\\\textit{drew last breath''}};

\fill[rC] (3.92,1.88) circle (3pt);
\node[rC, font=\scriptsize, anchor=north] at (3.92,1.72)
  {\texttt{deathPlace}};

\draw[->, rC!78, line width=1.25pt] (2.20,1.88) -- (3.60,1.88);
\node[rC!78, font=\tiny, anchor=south] at (2.90,1.95) {\textit{bridged}};

\fill[tC!12] (3.48,4.85) ellipse (1.45 and 0.88);
\draw[tC!38, dashed, line width=0.62pt] (3.48,4.85) ellipse (1.45 and 0.88);

\fill[tC] (2.48,5.02) circle (3pt);
\node[tC, font=\scriptsize, anchor=east] at (2.35,5.02) {\textit{not clean cup}};

\fill[tC] (3.58,5.42) circle (3pt);
\node[tC, font=\scriptsize, anchor=south] at (3.58,5.54) {\textit{dirty cup}};

\fill[tC] (4.38,4.85) circle (3pt);
\node[tC, font=\scriptsize, anchor=west] at (4.51,4.85) {\textit{filthy cup}};

\node[tC!78, font=\tiny] at (3.48,3.92) {\textit{concept cluster (P1)}};

\fill[nC] (5.32,5.78) circle (3pt);
\node[nC, font=\scriptsize, anchor=west] at (5.45,5.78) {\textit{a mammal}};

\fill[nC!62] (0.68,1.22) circle (3pt);
\node[nC!72, font=\scriptsize, anchor=east] at (0.55,1.22)
  {\textit{not a mammal}};

\node[nC!68, font=\tiny] at (2.95,3.52) {\textit{negation separated (P3)}};

\end{scope}

\end{tikzpicture}%
}
\caption{%
  Concept-equivalence fine-tuning, illustrated.
  \textbf{B0:} a frozen encoder splits synonymous paraphrases (\textcolor{teal!80!black}{teal}) by surface form, conflates negations (\textcolor{orange!80!red}{orange}), and leaves NL queries far from structured targets (\textcolor{blue!65!black}{blue}).
  \textbf{Centre:} synonym (\textit{syn}) and term-definition (\textit{t2d}) pairs are pulled together via InfoNCE; hard negatives (\textit{hn}) are pushed apart via BCE.
  \textbf{B1:} the space is recalibrated---paraphrases cluster (P1), the semantic gap bridges, and negation separates (P3).%
}
\label{fig:concept_space}
\end{figure*}

\section{Introduction}


Semantic compositionality is the principle in linguistics and philosophy that the meaning of a complex expression arises from the meanings of its individual components together with the way those components are structured and combined\citep{frege1892}.
In order for encoders to produce faithful concept representations, they must capture conceptual compositionality. Different modifier types such as intersective, subsective, relational, modal, and privative, contribute meaning through fundamentally different semantic operators \citep{carvalho2025montague}, yet a sentence encoder must realize all of them inside a single latent geometry scored by one similarity function.
Our theoretical lens is that conceptual compositionality in encoders is an approximate homomorphism problem (formalised in Appendix~\ref{app:theory}).
An encoder yields good concept representations only when the typed semantic operators required by a concept family admit low-distortion geometric realization.
The empirical question this paper asks and answers is which parts of that structure current sentence encoders already support, and which parts remain mismatched to the supervision used to train them.

We study this through concept retrieval, which makes representational quality of concept compositions directly measurable. The encoder must map a query to the same region as any denotationally equivalent target, regardless of surface form.
Complex nominals pose a hard challenge. Intersective, relational, and privative modifiers each realize meaning through different operators, yet all must coexist in a single latent geometry scored by one similarity function. This is a fundamental compositional tension that is illustrated geometrically in Figure~\ref{fig:concept_space}.
Evidence that this tension remains an open problem comes from our negation stress test, where the best frozen baseline scores 0.470 ROC-AUC (below chance), assigning higher similarity to negated definitions than to correct ones.

Several design dimensions are natural candidates for closing this gap.
Fine-tuning on concept-equivalence pairs may reshape the latent space toward the right geometry.
At the readout level, probing studies establish that upper transformer layers encode more semantic information than lower ones \citep{peters2018elmo,jawahar2019bert,tenney2019bert}, motivating cross-layer pooling as a candidate improvement.
Hard negative supervision may further sharpen discrimination between near-miss distractors.
And theoretically, there is no formal account of when a sentence encoder can support a given modifier composition family, nor of the geometric conditions under which current training objectives succeed or fail.

We structure the empirical investigation around three hypotheses.
\textbf{H1} \emph{(fine-tuning is necessary):} concept-equivalence fine-tuning substantially improves complex nominal retrieval over frozen baselines.
\textbf{H2} \emph{(cross-layer pooling):} weighted or input-adaptive mixtures over multiple layers improve over fine-tuned mean pooling.
\textbf{H3} \emph{(hard negatives):} hard negative supervision improves retrieval ranking in addition to calibration.

\paragraph{Contributions.}
\textbf{(T) $\varepsilon_\tau$-compositionality framework:} We introduce a formal characterisation of when a sentence encoder supports a given modifier composition family: $f_\theta$ is $\varepsilon_\tau$-compositional if a low-distortion latent operator $\Phi_\tau$ exists for modifier type $\tau$. This identifies two interacting bottlenecks (representational and objective) and predicts the modifier-family pattern (P4) from first principles. Formal bounds are derived in Appendix~\ref{app:theory}.
We additionally identify four empirical principles of compositional concept representation.
\textbf{(P1) Recalibration, not expansion:} concept-equivalence fine-tuning reshapes the latent geometry, collapsing anisotropy from 0.126 to 0.012 and improving term-to-definition Recall@10 from 0.552 to 0.654, while leaving effective rank unchanged. Fine-tuning recalibrates which regions of the space collapse together, without growing the space.
\textbf{(P2) Final-layer concentration precedes fine-tuning:} sentence-level pre-training already concentrates semantic signal into the final transformer layer, explaining why cross-layer pooling offers no consistent benefit after concept-equivalence fine-tuning. Confirmed across both NP-paraphrase and term-to-definition tasks. 
\textbf{(P3) Calibration and ranking are dissociable:} hard negatives improve discrimination (ROC-AUC $+$0.19--$+$0.46) without improving retrieval ranking, establishing that calibration and ranking are separable training targets.
\textbf{(P4) Supervision must match composition type:} concept-equivalence training improves intersective and subsective families while degrading relational and intensional types, exposing a fundamental mismatch between equivalence-only supervision and typed semantic operators.
We also release two new evaluation datasets: a DBpedia semantic-gap benchmark (3k train / 250 test, zero lexical overlap) and a modifier-labeled NP paraphrase suite (4,000 pairs across five composition families).


\section{Related Work}

\paragraph{Dense retrieval and contrastive learning.}
Sentence-BERT \citep{reimers2019sentencebert} establishes fine-tuned bi-encoders as practical dense retrievers, and SimCSE \citep{gao2021simcse} shows that in-batch contrastive objectives with hard negatives improve representation geometry.
We extend this interface to concept-equivalence retrieval and find that semantic signal dominates the final layer in sentence-fine-tuned encoders, limiting the benefit of cross-layer pooling.

\paragraph{Biomedical concept normalisation.}
BioSyn \citep{sung2020biosyn}, SapBERT \citep{liu2021sapbert}, and BioLORD \citep{remy2022biolord} demonstrate that synonym-marginalization and definition-aware contrastive training substantially improve biomedical entity retrieval, and \citet{tutubalina2020fair} show that reported accuracy depends heavily on split design.
Training on dictionary definitions as concept supervision has prior support \citep{hill2016dictionary,carvalho2023definitions}; we use WordNet/Wiktionary synonym and definition pairs in the same vein, but centre our analysis on the mechanism of fine-tuning and introduce controlled modifier-type evaluation absent from prior work.

\paragraph{Distributional composition and modifier sensitivity.}
Compositional distributional semantics \citep{mitchell2010composition,baroni2010nouns} shows that typed composition operators outperform uniform ones and that adjective denotation depends on semantic role---directly motivating the $\varepsilon_\tau$-compositionality framework in \S\ref{sec:prelim}.
The modifier typology underlying our benchmark (intersective, subsective, relational, modal, privative) originates in formal semantics \citep{partee1995lexical}, and \citet{ettinger2018composition}, \citet{shwartz2019nouncompounds}, and \citet{carvalho2025montague} show that these distinctions remain problematic for contemporary encoders.
Our NP paraphrase benchmark operationalises these concerns as a retrieval task with explicit modifier-family labels, to our knowledge the first to stratify retrieval performance across Montague modifier families.

\paragraph{Layer distribution and geometry.}
Probing studies establish that lower layers of pretrained transformers encode syntax while upper layers encode semantics \citep{peters2018elmo,jawahar2019bert,tenney2019bert,rogers2020bertology}, and \citet{ethayarajh2019geometry} show this hierarchy is reflected in anisotropy: upper layers are geometrically more uniform and task-ready.
We find this distribution collapses in sentence-fine-tuned encoders: prior contrastive training has already concentrated semantic signal into the final layer, and concept-equivalence fine-tuning sharpens it further, leaving nothing for cross-layer readout to exploit.
While hyperbolic geometries \citep{nickel2017poincare,valentino2024hyperbolic} offer theoretically richer hierarchical structure, our results confirm that geometry choice is secondary to training supervision once the space is well-calibrated.

\section{Representational Compositionality}
\label{sec:prelim}

\subsection{Problem Setting}

Let $\mathcal{X}$ be a space of texts (terms, noun phrases, definitions, ontology labels).
We learn an encoder $f_\theta : \mathcal{X} \to \mathbb{R}^d$ and a retrieval score
$\mathrm{score}_\theta(q, y) \in \mathbb{R}$.
Given a query $q$ and a candidate pool $\mathcal{D} = \{y_1,\dots,y_N\}$,
the objective is to rank semantically equivalent candidates first:
\[
\begin{aligned}
  \mathrm{score}_\theta(q, y^+) &> \mathrm{score}_\theta(q, y^-)\\
  &\quad\text{whenever } \sem{q} = \sem{y^+} \neq \sem{y^-},
\end{aligned}
\]
where $\sem{\cdot}$ denotes the concept denotation.
In practice, positives are synonym pairs, term-definition pairs, and cross-source
definition pairs from the same concept.
The hard case is a zero-overlap triplet where the correct answer $y^+$ shares no surface tokens with $q$.

\subsection{From Semantic Composition to Representational Composition}
\label{ssec:semantic-composition}

Semantic compositionality alone does not guarantee compositional representations.
Let $f_\theta : \mathcal{X} \to \mathbb{R}^d$ be a sentence encoder.
We say that $f_\theta$ is $\varepsilon_\tau$-compositional for modifier family $\tau$ if there exists a latent operator
\[
  \Phi_\tau : \mathbb{R}^d \times \mathbb{R}^d \to \mathbb{R}^d
\]
such that for all valid modifier--head pairs $(m,h)$ of type $\tau$,
\[
  \left\lVert f_\theta\!\left(m \circ_\tau h\right)
  - \Phi_\tau\!\left(f_\theta(m), f_\theta(h)\right) \right\rVert
  \le \varepsilon_\tau.
\]
This definition makes explicit the bridge between conceptual and geometric compositionality.
Semantic composition is typed, but the encoder must realize all such operators in a shared latent space.
Concept retrieval then depends on low distortion of the relevant $\Phi_\tau$, and a scoring function that ranks denotationally equivalent expressions above non-equivalent distractors.

Appendix~\ref{app:theory} formalizes these definitions and derives theoretical bounds on retrieval distortion for each composition family.

\subsection{Why Pooled Embeddings Struggle}

Following Montague semantics \citep{carvalho2025montague}, the denotation of a complex nominal is:
\[
  \sem{m \circ_\tau h} = C_\tau\!\bigl(\sem{m},\, \sem{h}\bigr),
\]
where $\tau$ is the modifier composition type and $C_\tau$ is a type-specific operator.
Different modifiers types instantiate materially different $C_\tau$.
These five types constitute the standard complete typology of modifier composition under Montague generative semantics \citep{carvalho2025montague}, spanning the full range from extensional set-intersection (intersective) to intensional non-instantiation (privative).
A pooled vector $z(x) = g(H^{(1)},\dots,H^{(L)})$ cannot expose the type $\tau$ explicitly, but encodes it implicitly in the geometry.

This creates two interacting bottlenecks.
The \textbf{representation bottleneck}: structured token-role information is collapsed into one vector, and multiple semantic relations must coexist in one shared neighborhood structure.
The \textbf{objective bottleneck}: standard sentence embedding training rewards broad
distributional semantic similarity, not operator-sensitive conceptual precision.
Our study isolates and investigates these two aspects. Through contrastive fine-tuning on synonym and definition pairs directly, we target the objective bottleneck, and a cross-layer readout ablation tests whether the representation bottleneck matters after fine-tuning.

\section{Proposed Method}

\begin{figure*}[t]
\centering
\resizebox{\linewidth}{!}{%
\begin{tikzpicture}[font=\small, >=Latex,
  ph/.style={draw=gray!50, rounded corners=6pt, fill=#1,
             minimum width=3.6cm, minimum height=5.8cm},
  ph/.default=gray!6,
  it/.style={draw=gray!35, rounded corners=3pt, fill=#1,
             font=\scriptsize, inner sep=5pt, text width=3.0cm, align=center},
  it/.default=white,
  arr/.style={->, line width=1.4pt, gray!65}
]

\node[ph=blue!8] at (0,0) {};
\node[font=\footnotesize\bfseries] at (0, 2.7) {Training Data};
\node[it=blue!18] at (0, 1.5) {\textbf{WordNet}\\315K syn $+$ 212K t2d};
\node[it=blue!18] at (0, 0.1) {\textbf{Wiktionary}\\590K syn $+$ 2.2M t2d};
\node[font=\scriptsize, gray!70, align=center] at (0,-0.7) {3.3M total pairs};
\node[it=orange!22] at (0,-1.9) {\textbf{Hard negatives}\\antonym $\cdot$ negate\\POS swap $\cdot$ type swap};

\node[ph=teal!7] at (4.4,0) {};
\node[font=\footnotesize\bfseries] at (4.4, 2.7) {Supervision};
\node[it=teal!18] at (4.4, 1.6) {$\mathcal{L}_\mathrm{syn}$: synonym pairs\\``fast'' $\approx$ ``quick''};
\node[it=teal!18] at (4.4, 0.3) {$\mathcal{L}_\mathrm{t2d}$: term\,$\leftrightarrow$\,defn\\$+\;0.7\,\mathcal{L}_\mathrm{d2d}$ cross-src};
\node[it=orange!22] at (4.4,-1.0) {$+\;0.5\,\mathcal{L}_\mathrm{neg}$: BCE\\hard-negative term};
\node[font=\tiny, gray, align=center] at (4.4,-2.2) {headword masking};

\node[ph=purple!10] at (8.8,0) {};
\node[font=\footnotesize\bfseries] at (8.8, 2.7) {Bi-Encoder};
\node[font=\tiny, gray!80, align=center] at (8.8, 2.2) {\texttt{all-mpnet-base-v2}};
\foreach \ii/\yy/\cc in {12/1.55/purple!30, 11/1.12/purple!22, 10/0.69/purple!14, 9/0.26/purple!8}{
  \draw[fill=\cc, rounded corners=2pt] (7.82,\yy-0.16) rectangle (9.78,\yy+0.16);
  \node[font=\tiny] at (8.8,\yy) {layer \ii};
}
\node[font=\tiny, gray] at (8.8,-0.12) {$\vdots$};
\draw[fill=gray!12, rounded corners=2pt] (7.82,-0.47) rectangle (9.78,-0.15);
\node[font=\tiny] at (8.8,-0.31) {layer 1};
\node[it=yellow!28] at (8.8,-1.3) {\textbf{Readout (ablation)}\\mean $\cdot$ wtd.\ mean $\cdot$ CLS\\input-dep.\ gate $\cdot$ deep sup.};
\node[font=\tiny, gray, align=center] at (8.8,-2.25) {8k steps $\cdot$ batch 16};

\node[ph=yellow!9] at (13.2,0) {};
\node[font=\footnotesize\bfseries] at (13.2, 2.7) {Inference};
\node[it=yellow!22] at (13.2, 1.6) {Query $q \to f_\theta(q)\!\in\!\mathbb{R}^d$};
\node[it=yellow!22] at (13.2, 0.3) {Candidate $y \to f_\theta(y)$};
\node[it=yellow!32] at (13.2,-1.0) {$\mathrm{score}(q,y) = \dfrac{f_\theta(q)\cdot f_\theta(y)}{\|f_\theta(q)\|\,\|f_\theta(y)\|}$};
\node[font=\tiny, gray, align=center] at (13.2,-2.2) {rank by score};

\node[ph=green!8] at (17.6,0) {};
\node[font=\footnotesize\bfseries] at (17.6, 2.7) {Evaluation};
\node[it] at (17.6, 1.4) {3 decontaminated splits\\in-domain $\cdot$ HH $\cdot$ SH\\R@10 $\cdot$ MRR $\cdot$ ROC-AUC};
\node[it=teal!12] at (17.6,-0.2) {NP paraphrase\\5 modifier families\\R@1 $\cdot$ R@10};
\node[it=blue!12] at (17.6,-1.7) {DBpedia gap\\245 test queries\\R@10 $\cdot$ MRR};

\draw[arr] (1.8,0) -- (2.6,0);
\draw[arr] (6.2,0) -- (7.0,0);
\draw[arr] (10.6,0) -- (11.4,0);
\draw[arr] (15.0,0) -- (15.8,0);

\tikzset{hyp/.style={rounded corners=2pt, font=\tiny\bfseries, inner sep=2.5pt}}
\node[hyp, draw=teal!80, fill=teal!30, text=teal!90, anchor=north east] at (6.1, 2.8) {H1};
\node[hyp, draw=yellow!80, fill=white, text=yellow!70!black] at (10.4, -1.2) {H2};
\node[hyp, draw=orange!70, fill=orange!8, text=orange!90] at (1.6, -1.9) {H3};
\node[hyp, draw=orange!70, fill=orange!8, text=orange!90] at (5.9, -1) {H3};
\node[hyp, draw=purple!60, fill=purple!8, text=purple!80] at (10.2, 1.55) {P2};
\node[hyp, draw=gray!60, fill=gray!8, text=gray!80] at (19.2, 1.6) {P3};
\node[hyp, draw=teal!80, fill=teal!35, text=teal!90] at (19.2, -0.2) {P4};

\end{tikzpicture}%
}
\caption{End-to-end pipeline. WordNet/Wiktionary synonym and definition pairs (3.3M) are combined with five hard-negative rule types in a joint InfoNCE + BCE objective, training a bi-encoder (\texttt{all-mpnet-base-v2}) under eight ablation conditions varying readout strategy and hard-negative supervision.
Evaluation covers three decontaminated term-definition splits (in-domain, HH, SH), modifier-sensitive NP paraphrase retrieval, and zero-lexical-overlap DBpedia property retrieval.
Coloured badges indicate which component each hypothesis and principle targets.}
\label{fig:pipeline}
\end{figure*}

\subsection{Model and Conditions}

The models used in our ablations are based on the bi-encoder sentence-transformer architecture \citep{reimers2019sentencebert} where both query and candidate texts are encoded independently, and similarity is computed via cosine distance.
We define readout (how a single vector is extracted from the transformer's hidden states) as the primary design variable.
Table~\ref{tab:conditions} defines the eight conditions evaluated.

\begin{table}[t]
\centering
\small
\setlength{\tabcolsep}{3pt}
\begin{tabular}{llcl}
\toprule
ID & Pooling & HN & Note \\
\midrule
B0    & Mean (frozen)        &            & Frozen baseline \\
B0-WM & Wtd.\ mean (frozen)  &            & Frozen wtd.\ mean \\
B1    & Mean                 & \checkmark & Fine-tuned ref. \\
B2    & CLS token            & \checkmark & \\
B3    & Weighted mean        &            & Pos.\ only \\
M1    & Weighted mean        & \checkmark & \\
M2    & Input-dep.\ gate     & \checkmark & MLP-gated \\
M3    & Deep supervision     & \checkmark & Per-layer heads \\
\bottomrule
\end{tabular}
\caption{Encoder conditions (shared backbone: \texttt{all-mpnet-base-v2}, 8k steps, batch 16, single seed).
B0/B0-WM are frozen and differ only in pooling, isolating cross-layer averaging without fine-tuning.
Weighted mean pools the last $K{=}4$ layers; M2 uses a per-input MLP gate; M3 attaches per-layer projection heads during training.}
\label{tab:conditions}
\end{table}

The full eight-condition ablation is conducted on a single backbone, \texttt{all-mpnet-base-v2}, to isolate the contribution of each readout and supervision design choice without conflating backbone effects.
To test whether the layerwise structure and the B0$\to$B1 gain generalise beyond this choice, we additionally train B0 and B1 on three further backbones: \texttt{paraphrase-mpnet-base-v2} (same MPNet architecture, paraphrase-oriented pre-training objective), \texttt{e5-base-v2} (a different model family with strong general-purpose text embeddings), and \texttt{jina-embeddings-v5-text-nano} (PEFT-based, trained with a distinct contrastive objective outside the standard sentence-transformer paradigm).
Together, these four backbones provide same-architecture/different-objective variation (\texttt{all-mpnet} vs.\ \texttt{para-mpnet}), cross-family generalisation (\texttt{e5-base}), and a deliberate architectural contrast case (\texttt{jina-nano}).

\subsection{Cross-Layer Readout}

Let $H^{(l)}(x) \in \mathbb{R}^{T \times d}$ be the token hidden states at layer $l$
for an input of length $T$.
The B1 baseline uses standard mean pooling over the final layer only.
The cross-layer variants instead form a weighted mixture over the last $K{=}4$ layers
before pooling:
\[
\begin{aligned}
  z(x) &= P\!\left(\sum_{l=L-K+1}^{L} \alpha_l\, H^{(l)}(x)\right), \\
  \alpha &= \mathrm{softmax}(w),
\end{aligned}
\]
where $P(\cdot)$ denotes mean pooling, $w \in \mathbb{R}^K$ are learned scalar weights
shared across all inputs (B3, M1), and $z(x)$ is L2-normalised before scoring.

\noindent\textbf{M2 (input-dependent gating).}
The global weights $w$ are replaced by a per-input MLP:
\[
  \alpha(x) = \mathrm{softmax}\!\left(\mathrm{MLP}\!\left(P(H^{(L)}(x))\right)\right),
\]
so different inputs can weight layers differently.

\noindent\textbf{M3 (deep supervision).}
During training, a separate linear projection head $\phi_l$ is attached to each of
the last $K$ layers, and an InfoNCE loss is computed at every layer independently:
\[
  \mathcal{L}_{\mathrm{deep}} = \sum_{l=L-K+1}^{L}
    \mathcal{L}_{\mathrm{InfoNCE}}\!\left(\phi_l\!\left(P(H^{(l)})\right)\right).
\]
At inference the projection heads are discarded and the standard weighted-mean
readout is used, forcing all layers to develop useful representations during training
rather than letting the gradient concentrate at the final layer.

\subsection{Training Objective}

The positive loss is a weighted InfoNCE sum over three supervision views:
\[
\mathcal{L}_{\mathrm{pos}}
= \mathcal{L}_{\mathrm{syn}} + \mathcal{L}_{\mathrm{t2d}} + 0.7\,\mathcal{L}_{\mathrm{d2d}},
\]
where $\mathcal{L}_{\mathrm{syn}}$, $\mathcal{L}_{\mathrm{t2d}}$, and
$\mathcal{L}_{\mathrm{d2d}}$ are InfoNCE losses over synonym pairs, term-definition pairs,
and cross-source definition-definition pairs respectively.
The 0.7 weight on d2d reflects its noisier cross-source alignment signal.

When hard negatives are used (all conditions except B0 and B3), a binary cross-entropy term
targets five rule-based negative types per definition
(see Appendix~\ref{app:hardnegs} for full rule definitions and examples).
The complete objective is:
\[
\mathcal{L} = \mathcal{L}_{\mathrm{pos}} + 0.5\,\mathcal{L}_{\mathrm{neg}},
\]
where the 0.5 coefficient moderates the contribution of the hard-negative term relative to the positive InfoNCE objective.
Unlike the batch-level InfoNCE loss, the BCE term applies a gradient of fixed magnitude to each negative pair independently of batch size, without down-weighting, its aggregate contribution would scale with the number of negative pairs and risk overshadowing the ranking objective.
The coefficient was not ablated, however Appendix~\ref{app:loss} provides evidence of its importance by evaluating a variant (B1-unified) in which hard negatives are folded into the InfoNCE denominator, eliminating the need for an explicit weight at the cost of a substantial reduction in calibration.

\subsection{Training Details}

Full hyperparameters are listed in Appendix~\ref{app:hparams}.
All experiments use AdamW with linear warmup (100 steps) and linear decay to zero.
A layerwise LR decay of 0.90 per transformer layer mitigates catastrophic forgetting \citep{howard2018ulmfit}; the pooling head and cross-layer weights receive $5\times$ the base rate.

\subsection{Data}

\paragraph{Training.}
3.3 million synonym and term-to-definition pairs from \textbf{WordNet} (527K: 315K syn + 212K t2d) and \textbf{Wiktionary} (2.8M: 590K syn + 2.2M t2d).
Cross-source definition--definition pairs (d2d) are generated on-the-fly by grouping definitions of the same term across both sources (up to 6 per term; not counted in the 3.3M total).
A headword-masking transform removes the defined term from each definition before training to prevent lexical shortcutting. Representative examples of all three pair types are given in Appendix~\ref{app:data_examples}.

\paragraph{Evaluation splits.}
Three decontaminated splits, each with test concepts strictly absent from training.
\textbf{In-domain} (t2d / d2t): held-out WordNet/Wiktionary pairs partitioned at the synset level.
\textbf{Head-holdout} (HH): the most frequent synsets withheld entirely from training, testing generalisation to high-frequency unseen concepts.
\textbf{Source-holdout} (SH): trained on Wiktionary, evaluated on WordNet, testing cross-source generalisation.
Each split includes hard-negative stress tests (Appendix~\ref{app:hardnegs}).

\paragraph{DBpedia semantic-gap benchmark.}
A new evaluation dataset of (query, property label) pairs over the DBpedia ontology, consisting of 3,063 training queries generated by GPT-4o-mini with zero lexical overlap with property labels, and 245 manually reviewed test pairs with strictly disjoint train/test properties.

\paragraph{NP paraphrase benchmark.}
A new modifier-sensitive evaluation suite of 4,000 noun-phrase paraphrase pairs balanced across five composition families, namely intersective (900), subsective (800), modal (800), privative (800), relational (700). These cover the complete Montague modifier typology (Section~\ref{sec:prelim}).
Pairs are drawn from two equivalence regimes of 2,000 each, to create \textbf{strict} (GPT-4o-mini-generated paraphrases with zero lexical overlap enforced after head masking) and \textbf{near} (rule-based head-broadening or relation-abstraction transforms applied to the strict pairs).

\section{Empirical Analysis}
\label{sec:results}


\subsection{Layer-12 Dominance: The Mechanism Behind H2}
\label{ssec:mech_exp}

\begin{figure*}[t]
\centering
\resizebox{\linewidth}{!}{%
\begin{tikzpicture}[font=\small, >=Latex]
  \definecolor{mpnetC}{RGB}{72,120,207}
  \definecolor{pmpnetC}{RGB}{60,150,60}
  \definecolor{e5C}{RGB}{210,115,0}
  \definecolor{jinaC}{RGB}{155,0,210}
  \definecolor{miniT}{RGB}{0,155,150}

  \begin{scope}
    \draw[->] (0,0) -- (0,5.8);
    \draw[->] (0,0) -- (9.0,0);
    \node[anchor=south, font=\footnotesize\bfseries] at (4.25,5.8) {Retrieval quality (MRR) by layer};
    \node[rotate=90, anchor=south, font=\footnotesize] at (-0.7,2.6) {MRR};
    \node[anchor=north, font=\footnotesize] at (4.25,-0.5) {Layer index};
    \foreach \y/\ylab in {0.35/0.42, 1.05/0.44, 1.75/0.46, 2.45/0.48, 3.15/0.50, 3.85/0.52, 4.55/0.54, 5.25/0.56} {
      \draw[gray!25] (0,\y) -- (8.5,\y);
      \draw (-0.05,\y) -- (0,\y);
      \node[anchor=east, font=\scriptsize] at (-0.08,\y) {\ylab};
    }
    \foreach \xpos/\layer in {0.5/5, 1.5/6, 2.5/7, 3.5/8, 4.5/9, 5.5/10, 6.5/11, 7.5/12} {
      \node[anchor=north, font=\scriptsize] at (\xpos,-0.08) {\layer};
    }
    \draw[mpnetC, line width=1.6pt]
      (0.5,0.98) -- (1.5,1.26) -- (2.5,1.37) -- (3.5,1.68) -- (4.5,1.82) -- (5.5,2.59) -- (6.5,3.68) -- (7.5,4.03);
    \foreach \xpos/\ypos in {0.5/0.98,1.5/1.26,2.5/1.37,3.5/1.68,4.5/1.82,5.5/2.59,6.5/3.68,7.5/4.03}
      \fill[mpnetC] (\xpos,\ypos) circle (0.09);
    \draw[mpnetC, line width=1.6pt, dashed]
      (0.5,0.90) -- (1.5,0.78) -- (2.5,0.68) -- (3.5,0.76) -- (4.5,0.97) -- (5.5,2.45) -- (6.5,3.93) -- (7.5,4.24);
    \foreach \xpos/\ypos in {0.5/0.90,1.5/0.78,2.5/0.68,3.5/0.76,4.5/0.97,5.5/2.45,6.5/3.93,7.5/4.24}
      \fill[mpnetC] (\xpos,\ypos) circle (0.09);
    \draw[pmpnetC, line width=1.4pt]
      (0.5,1.09) -- (1.5,1.33) -- (2.5,1.44) -- (3.5,1.37) -- (4.5,1.51) -- (5.5,1.68) -- (6.5,3.68) -- (7.5,4.90);
    \foreach \xpos/\ypos in {0.5/1.09,1.5/1.33,2.5/1.44,3.5/1.37,4.5/1.51,5.5/1.68,6.5/3.68,7.5/4.90}
      \fill[pmpnetC] (\xpos,\ypos+0.12) -- (\xpos-0.10,\ypos-0.07) -- (\xpos+0.10,\ypos-0.07) -- cycle;
    \draw[e5C, line width=1.4pt]
      (0.5,0.95) -- (1.5,0.49) -- (2.5,0.74) -- (3.5,1.03) -- (4.5,0.93) -- (5.5,2.08) -- (6.5,3.89) -- (7.5,5.34);
    \foreach \xpos/\ypos in {0.5/0.95,1.5/0.49,2.5/0.74,3.5/1.03,4.5/0.93,5.5/2.08,6.5/3.89,7.5/5.34}
      {\draw[e5C,line width=1.0pt] (\xpos-0.09,\ypos-0.09)--(\xpos+0.09,\ypos+0.09);
       \draw[e5C,line width=1.0pt] (\xpos+0.09,\ypos-0.09)--(\xpos-0.09,\ypos+0.09);}
    \draw[pmpnetC, line width=1.4pt, dashed]
      (0.5,1.02) -- (1.5,1.05) -- (2.5,1.12) -- (3.5,1.20) -- (4.5,1.22) -- (5.5,1.95) -- (6.5,3.85) -- (7.5,5.25);
    \foreach \xpos/\ypos in {0.5/1.02,1.5/1.05,2.5/1.12,3.5/1.20,4.5/1.22,5.5/1.95,6.5/3.85,7.5/5.25}
      \fill[pmpnetC] (\xpos,\ypos+0.12) -- (\xpos-0.10,\ypos-0.07) -- (\xpos+0.10,\ypos-0.07) -- cycle;
    \draw[e5C, line width=1.4pt, dashed]
      (0.5,0.75) -- (1.5,0.56) -- (2.5,0.58) -- (3.5,0.60) -- (4.5,0.65) -- (5.5,2.70) -- (6.5,4.15) -- (7.5,5.50);
    \foreach \xpos/\ypos in {0.5/0.75,1.5/0.56,2.5/0.58,3.5/0.60,4.5/0.65,5.5/2.70,6.5/4.15,7.5/5.50}
      {\draw[e5C,line width=1.0pt] (\xpos-0.09,\ypos-0.09)--(\xpos+0.09,\ypos+0.09);
       \draw[e5C,line width=1.0pt] (\xpos+0.09,\ypos-0.09)--(\xpos-0.09,\ypos+0.09);}
    \draw[jinaC, line width=1.4pt]
      (0.5,0.82) -- (1.5,0.99) -- (2.5,1.20) -- (3.5,1.32) -- (4.5,1.45) -- (5.5,2.09) -- (6.5,3.90) -- (7.5,4.42);
    \foreach \xpos/\ypos in {0.5/0.82,1.5/0.99,2.5/1.20,3.5/1.32,4.5/1.45,5.5/2.09,6.5/3.90,7.5/4.42}
      \fill[jinaC] (\xpos,\ypos-0.12)--(\xpos-0.10,\ypos+0.07)--(\xpos+0.10,\ypos+0.07)--cycle;
    \draw[jinaC, line width=1.4pt, dashed]
      (0.5,0.92) -- (1.5,1.05) -- (2.5,1.22) -- (3.5,1.45) -- (4.5,1.80) -- (5.5,2.67) -- (6.5,4.29) -- (7.5,4.60);
    \foreach \xpos/\ypos in {0.5/0.92,1.5/1.05,2.5/1.22,3.5/1.45,4.5/1.80,5.5/2.67,6.5/4.29,7.5/4.60}
      \fill[jinaC] (\xpos,\ypos-0.12)--(\xpos-0.10,\ypos+0.07)--(\xpos+0.10,\ypos+0.07)--cycle;
    \fill[white,opacity=0.9] (0.05,4.18) rectangle (6.10,5.75);
    \draw[mpnetC,line width=1.4pt] (0.2,5.50)--(1.0,5.50);
    \fill[mpnetC] (0.60,5.50) circle (0.09);
    \node[anchor=west,font=\scriptsize] at (1.05,5.50) {B0 all-mpnet};
    \draw[mpnetC,line width=1.4pt,dashed] (3.2,5.50)--(4.0,5.50);
    \fill[mpnetC] (3.60,5.50) circle (0.09);
    \node[anchor=west,font=\scriptsize] at (4.05,5.50) {B1 all-mpnet};
    \draw[pmpnetC,line width=1.4pt] (0.2,5.12)--(1.0,5.12);
    \fill[pmpnetC] (0.60,5.24)--(0.50,5.05)--(0.70,5.05)--cycle;
    \node[anchor=west,font=\scriptsize] at (1.05,5.12) {B0 para-mpnet};
    \draw[pmpnetC,line width=1.4pt,dashed] (3.2,5.12)--(4.0,5.12);
    \fill[pmpnetC] (3.60,5.24)--(3.50,5.05)--(3.70,5.05)--cycle;
    \node[anchor=west,font=\scriptsize] at (4.05,5.12) {B1 para-mpnet};
    \draw[e5C,line width=1.4pt] (0.2,4.74)--(1.0,4.74);
    \draw[e5C,line width=1.0pt] (0.51,4.65)--(0.69,4.83);
    \draw[e5C,line width=1.0pt] (0.69,4.65)--(0.51,4.83);
    \node[anchor=west,font=\scriptsize] at (1.05,4.74) {B0 e5-base};
    \draw[e5C,line width=1.4pt,dashed] (3.2,4.74)--(4.0,4.74);
    \draw[e5C,line width=1.0pt] (3.51,4.65)--(3.69,4.83);
    \draw[e5C,line width=1.0pt] (3.69,4.65)--(3.51,4.83);
    \node[anchor=west,font=\scriptsize] at (4.05,4.74) {B1 e5-base};
    \draw[jinaC,line width=1.4pt] (0.2,4.36)--(1.0,4.36);
    \fill[jinaC] (0.60,4.24)--(0.50,4.43)--(0.70,4.43)--cycle;
    \node[anchor=west,font=\scriptsize] at (1.05,4.36) {B0 jina-nano};
    \draw[jinaC,line width=1.4pt,dashed] (3.2,4.36)--(4.0,4.36);
    \fill[jinaC] (3.60,4.24)--(3.50,4.43)--(3.70,4.43)--cycle;
    \node[anchor=west,font=\scriptsize] at (4.05,4.36) {B1 jina-nano};
  \end{scope}

  \begin{scope}[xshift=10.5cm]
    \draw[->] (0,0) -- (0,5.8);
    \draw[->] (0,0) -- (9.0,0);
    \node[anchor=south, font=\footnotesize\bfseries] at (4.25,5.8) {Geometry (anisotropy) by layer};
    \node[rotate=90, anchor=south, font=\footnotesize] at (-0.7,2.6) {Anisotropy};
    \node[anchor=north, font=\footnotesize] at (4.25,-0.5) {Layer index};
    \foreach \y/\ylab in {0.11/0.10, 1.21/0.20, 2.31/0.30, 3.41/0.40, 4.51/0.50} {
      \draw[gray!25] (0,\y) -- (8.5,\y);
      \draw (-0.05,\y) -- (0,\y);
      \node[anchor=east, font=\scriptsize] at (-0.08,\y) {\ylab};
    }
    \foreach \xpos/\layer in {0.5/5, 1.5/6, 2.5/7, 3.5/8, 4.5/9, 5.5/10, 6.5/11, 7.5/12} {
      \node[anchor=north, font=\scriptsize] at (\xpos,-0.08) {\layer};
    }
    \draw[mpnetC, line width=1.6pt]
      (0.5,4.19) -- (1.5,4.50) -- (2.5,4.08) -- (3.5,3.99) -- (4.5,4.13) -- (5.5,4.41) -- (6.5,4.04) -- (7.5,0.62);
    \foreach \xpos/\ypos in {0.5/4.19,1.5/4.50,2.5/4.08,3.5/3.99,4.5/4.13,5.5/4.41,6.5/4.04,7.5/0.62}
      \fill[mpnetC] (\xpos,\ypos) circle (0.09);
    \draw[mpnetC, line width=1.6pt, dashed]
      (0.5,4.44) -- (1.5,4.77) -- (2.5,4.47) -- (3.5,4.40) -- (4.5,4.39) -- (5.5,4.28) -- (6.5,3.47) -- (7.5,0.30);
    \foreach \xpos/\ypos in {0.5/4.44,1.5/4.77,2.5/4.47,3.5/4.40,4.5/4.39,5.5/4.28,6.5/3.47,7.5/0.30}
      \fill[mpnetC] (\xpos,\ypos) circle (0.09);
    \draw[pmpnetC, line width=1.4pt]
      (0.5,4.68) -- (1.5,4.81) -- (2.5,4.38) -- (3.5,4.31) -- (4.5,4.29) -- (5.5,4.60) -- (6.5,4.28) -- (7.5,0.77);
    \foreach \xpos/\ypos in {0.5/4.68,1.5/4.81,2.5/4.38,3.5/4.31,4.5/4.29,5.5/4.60,6.5/4.28,7.5/0.77}
      \fill[pmpnetC] (\xpos,\ypos+0.12) -- (\xpos-0.10,\ypos-0.07) -- (\xpos+0.10,\ypos-0.07) -- cycle;
    \draw[jinaC, line width=1.4pt]
      (0.5,4.12) -- (1.5,3.89) -- (2.5,3.38) -- (3.5,3.57) -- (4.5,3.35) -- (5.5,4.82) -- (6.5,5.42) -- (7.5,2.88);
    \foreach \xpos/\ypos in {0.5/4.12,1.5/3.89,2.5/3.38,3.5/3.57,4.5/3.35,5.5/4.82,6.5/5.42,7.5/2.88}
      \fill[jinaC] (\xpos,\ypos-0.12)--(\xpos-0.10,\ypos+0.07)--(\xpos+0.10,\ypos+0.07)--cycle;
    \draw[pmpnetC, line width=1.4pt, dashed]
      (0.5,4.78) -- (1.5,4.75) -- (2.5,4.66) -- (3.5,4.55) -- (4.5,4.43) -- (5.5,4.30) -- (6.5,3.70) -- (7.5,0.66);
    \foreach \xpos/\ypos in {0.5/4.78,1.5/4.75,2.5/4.66,3.5/4.55,4.5/4.43,5.5/4.30,6.5/3.70,7.5/0.66}
      \fill[pmpnetC] (\xpos,\ypos+0.12) -- (\xpos-0.10,\ypos-0.07) -- (\xpos+0.10,\ypos-0.07) -- cycle;
    \draw[jinaC, line width=1.4pt, dashed]
      (0.5,4.17) -- (1.5,4.00) -- (2.5,4.09) -- (3.5,4.22) -- (4.5,4.11) -- (5.5,4.50) -- (6.5,4.80) -- (7.5,2.71);
    \foreach \xpos/\ypos in {0.5/4.17,1.5/4.00,2.5/4.09,3.5/4.22,4.5/4.11,5.5/4.50,6.5/4.80,7.5/3.10}
      \fill[jinaC] (\xpos,\ypos-0.12)--(\xpos-0.10,\ypos+0.07)--(\xpos+0.10,\ypos+0.07)--cycle;
    \draw[e5C, line width=1.4pt]
      (0.5,4.82) -- (1.5,4.91) -- (2.5,4.87) -- (3.5,4.71) -- (4.5,4.50) -- (5.5,4.55) -- (6.5,3.69) -- (7.5,1.90);
    \foreach \xpos/\ypos in {0.5/4.82,1.5/4.91,2.5/4.87,3.5/4.71,4.5/4.50,5.5/4.55,6.5/3.69,7.5/1.90}
      {\draw[e5C,line width=1.0pt] (\xpos-0.09,\ypos-0.09)--(\xpos+0.09,\ypos+0.09);
       \draw[e5C,line width=1.0pt] (\xpos+0.09,\ypos-0.09)--(\xpos-0.09,\ypos+0.09);}
    \draw[e5C, line width=1.4pt, dashed]
      (0.5,4.88) -- (1.5,4.99) -- (2.5,4.95) -- (3.5,4.85) -- (4.5,4.72) -- (5.5,4.65) -- (6.5,3.52) -- (7.5,1.33);
    \foreach \xpos/\ypos in {0.5/4.88,1.5/4.99,2.5/4.95,3.5/4.85,4.5/4.72,5.5/4.65,6.5/3.52,7.5/1.33}
      {\draw[e5C,line width=1.0pt] (\xpos-0.09,\ypos-0.09)--(\xpos+0.09,\ypos+0.09);
       \draw[e5C,line width=1.0pt] (\xpos+0.09,\ypos-0.09)--(\xpos-0.09,\ypos+0.09);}
    \fill[white,opacity=0.9] (0.05,0.18) rectangle (6.10,1.68);
    \draw[mpnetC,line width=1.4pt] (0.2,1.50)--(1.0,1.50);
    \fill[mpnetC] (0.60,1.50) circle (0.09);
    \node[anchor=west,font=\scriptsize] at (1.05,1.50) {B0 all-mpnet};
    \draw[mpnetC,line width=1.4pt,dashed] (3.2,1.50)--(4.0,1.50);
    \fill[mpnetC] (3.60,1.50) circle (0.09);
    \node[anchor=west,font=\scriptsize] at (4.05,1.50) {B1 all-mpnet};
    \draw[pmpnetC,line width=1.4pt] (0.2,1.12)--(1.0,1.12);
    \fill[pmpnetC] (0.60,1.24)--(0.50,1.05)--(0.70,1.05)--cycle;
    \node[anchor=west,font=\scriptsize] at (1.05,1.12) {B0 para-mpnet};
    \draw[pmpnetC,line width=1.4pt,dashed] (3.2,1.12)--(4.0,1.12);
    \fill[pmpnetC] (3.60,1.24)--(3.50,1.05)--(3.70,1.05)--cycle;
    \node[anchor=west,font=\scriptsize] at (4.05,1.12) {B1 para-mpnet};
    \draw[jinaC,line width=1.4pt] (0.2,0.74)--(1.0,0.74);
    \fill[jinaC] (0.60,0.62)--(0.50,0.81)--(0.70,0.81)--cycle;
    \node[anchor=west,font=\scriptsize] at (1.05,0.74) {B0 jina-nano};
    \draw[jinaC,line width=1.4pt,dashed] (3.2,0.74)--(4.0,0.74);
    \fill[jinaC] (3.60,0.62)--(3.50,0.81)--(3.70,0.81)--cycle;
    \node[anchor=west,font=\scriptsize] at (4.05,0.74) {B1 jina-nano};
    \draw[e5C,line width=1.4pt] (0.2,0.36)--(1.0,0.36);
    \draw[e5C,line width=1.0pt] (0.51,0.27)--(0.69,0.45); \draw[e5C,line width=1.0pt] (0.69,0.27)--(0.51,0.45);
    \node[anchor=west,font=\scriptsize] at (1.05,0.36) {B0 e5-base};
    \draw[e5C,line width=1.4pt,dashed] (3.2,0.36)--(4.0,0.36);
    \draw[e5C,line width=1.0pt] (3.51,0.27)--(3.69,0.45); \draw[e5C,line width=1.0pt] (3.69,0.27)--(3.51,0.45);
    \node[anchor=west,font=\scriptsize] at (4.05,0.36) {B1 e5-base};
  \end{scope}
\end{tikzpicture}%
}
\caption{Per-layer MRR (left) and anisotropy (right) for transformer layers 5--12 on NP paraphrase pairs (solid = B0 frozen, dashed = B1 fine-tuned).
Final-layer MRR dominance pre-exists fine-tuning across all tested backbones: layer-12 MRR far exceeds earlier layers, with anisotropy dropping sharply at the final layer.
Fine-tuning sharpens both effects without redistributing signal to earlier layers, leaving nothing for cross-layer readout to exploit.}
\label{fig:layerwise}
\end{figure*}

We present the layerwise analysis before the H1 ablation to establish the mechanistic explanation that directly predicts the H2 null result and motivates the $K{=}4$ window choice in the cross-layer conditions.
B1 results appear here alongside B0 for contrast; that B1 represents a substantial retrieval improvement over B0 is confirmed in §\ref{ssec:h1_exp}.

Figure~\ref{fig:layerwise} shows that the final layer dominates in both B0 and B1. MRR at layer 12 far exceeds layers 5--7, while anisotropy drops sharply at the last two layers.
Fine-tuning sharpens this concentration, in that earlier layers see lower MRR and higher anisotropy after training, but training does not redistribute information across layers.
The pattern holds across all tested sentence-transformer backbones.

Probing studies \citep{jawahar2019bert,tenney2019bert} established the layer hierarchy on raw pretrained models, where signal \emph{is} distributed and cross-layer readout could plausibly help.
In an already-sentence-fine-tuned encoder, that distribution collapses before our fine-tuning even begins, and concept-equivalence training sharpens it further.
There is no distributed multi-layer signal for a cross-layer readout to exploit. The H2 ablation (Table~\ref{tab:h2}, §\ref{ssec:h2_exp}) confirms this directly.

\subsection{H1: Fine-Tuning Substantially Improves Concept Retrieval}
\label{ssec:h1_exp}

Table~\ref{tab:h1} shows that B0 (frozen \texttt{all-mpnet-base-v2} with mean pooling) achieves t2d R@10 = 0.552 and negate-stress ROC-AUC = 0.470 (below chance). We highlight this failure mode in the introduction, where the frozen encoder assigns \emph{higher} similarity to negated definitions than to correct ones.
Concept-equivalence fine-tuning resolves both, as B1 reaches t2d R@10 = 0.654 and negate ROC-AUC = 0.980, confirming that supervision is necessary and that it specifically addresses the negation failure.
However, fine-tuning is not uniformly beneficial, given that B0 already achieves NP paraphrase R@10 = 0.704, exceeding all fine-tuned conditions (B1: 0.656, B3: 0.655).
The backbone's prior sentence-level training is well-suited to paraphrase retrieval. Concept-equivalence fine-tuning improves cross-modal concept matching precisely where the backbone is weak, at the cost of paraphrase-level similarity where it was already well-calibrated.

The B3 vs.\ M1 comparison reveals that B3 matches M1 on retrieval ranking (0.657 vs.\ 0.651 t2d R@10) but is far weaker on calibration (pair ROC-AUC 0.769 vs.\ 0.961) and semantic stress tests (negate ROC-AUC 0.528 vs.\ 0.984).
Hard negatives train the model to \emph{discriminate}, not to \emph{rank}. Whether hard negatives are necessary depends entirely on the downstream use case.

\begin{table*}[t]
\centering
\small
\setlength{\tabcolsep}{5pt}
\begin{tabular}{llrrrrrr}
\toprule
ID & Condition & t2d R@10 & t2d MRR & Pair ROC-AUC & HH d2t R@10 & Negate & NP R@10 \\
\midrule
B0    & Frozen baseline (mean)         & 0.552 & 0.432 & 0.718 & 0.546 & 0.470 & 0.704 \\
B0-WM & Frozen baseline (wtd.\ mean)   & 0.392 & 0.303 & 0.690 & 0.438 & 0.461 & 0.675 \\
B1 & Mean pool + hard neg.           & 0.654 & 0.537 & \textbf{0.963} & \textbf{0.686} & 0.980 & \textbf{0.656} \\
B3 & Weighted mean, no hard neg.     & \textbf{0.657} & \textbf{0.544} & 0.769
 & 0.677 & 0.528 & 0.655 \\
M1 & Weighted mean + hard neg.       & 0.651 & 0.532 & 0.961 & 0.673 & \textbf{0.984} & 0.623 \\
\bottomrule
\end{tabular}
\caption{H1 ablation results. t2d = term-to-definition (in-domain); HH = head-holdout; Negate = negation stress-test ROC-AUC; NP R@10 = noun-phrase paraphrase ($n$=4,000). Bold: best fine-tuned condition per column.}
\label{tab:h1}
\end{table*}

Fine-tuning collapses anisotropy from 0.126 to 0.012 while leaving effective rank essentially unchanged (247.0 $\to$ 247.2), confirming that concept-equivalence training reorganises the representation space rather than expanding it. Hard negatives are the dominant factor, given that B3 without hard negatives reaches only 0.166 anisotropy under an otherwise identical regime.
Full geometry statistics, condition-level comparisons, and an anisotropy--effective-rank scatter are in Appendix~\ref{app:geometry} (Figures~\ref{fig:anisotropy_comparison} and~\ref{fig:anisotropy_scatter}).

\subsection{H2: Cross-Layer Pooling Offers No Consistent Benefit}
\label{ssec:h2_exp}

Table~\ref{tab:h2} reveals cross-layer pooling to be redundant.
No pooling variant consistently outperforms the fine-tuned mean-pool baseline B1.
CLS pooling (B2) and the wider K=6 window underperform B1 on all splits.
Input-dependent gating (M2) is the only variant that matches or marginally exceeds B1
($+$0.005 t2d R@10), and the margin is $<$1\% absolute.
Deep supervision (M3) improves over the static weighted mixture M1 but does not reach B1.
This is the expected consequence of the layer-concentration principle (Section~\ref{ssec:mech_exp}).

\begin{table}[t]
\centering
\small
\setlength{\tabcolsep}{4pt}
\begin{tabular}{lrrrr}
\toprule
 & \multicolumn{2}{c}{In-domain} & \multicolumn{2}{c}{HH} \\
\cmidrule(lr){2-3}\cmidrule(lr){4-5}
 & t2d R@10 & d2t R@10 & t2d R@10 & d2t R@10 \\
\midrule
B1 & 0.654 & 0.661 & 0.673 & \textbf{0.686} \\
B2 & 0.650 & 0.657 & 0.666 & 0.679 \\
M1 & 0.651 & 0.655 & 0.665 & 0.673 \\
M2 & \textbf{0.659} & \textbf{0.664} & \textbf{0.676} & \textbf{0.688} \\
M3 & 0.649 & 0.654 & 0.667 & 0.674 \\
\bottomrule
\end{tabular}
\caption{H2 retrieval results (R@10). HH = head-holdout; t2d = term-to-definition; d2t = definition-to-term. No cross-layer variant consistently beats B1; M2 matches or marginally exceeds it by $<$1\% absolute. Bold: best per column.}
\label{tab:h2}
\end{table}

\subsection{H3: Hard Negatives Improve Calibration, Not Ranking}
\label{ssec:h3_exp}

\begin{table}[t]
\centering
\small
\setlength{\tabcolsep}{4pt}
\begin{tabular}{lrrrr}
\toprule
 & \multicolumn{2}{c}{No hard negatives} & \multicolumn{2}{c}{Hard negatives} \\
\cmidrule(lr){2-3}\cmidrule(lr){4-5}
Metric & B0 & B3 & B1 & M1 \\
\midrule
t2d R@10        & 0.552 & \textbf{0.657} & 0.654 & 0.651 \\
Pair ROC-AUC    & 0.718 & 0.769 & \textbf{0.963} & 0.961 \\
\midrule
Antonym flip    & 0.894 & 0.946 & \textbf{0.965} & 0.964 \\
Negate          & 0.470 & 0.528 & \textbf{0.986} & 0.984 \\
Same-POS (rnd.) & 0.903 & \textbf{0.962} & \textbf{0.962} & \textbf{0.962} \\
Same-POS (pfx.) & 0.856 & 0.919 & 0.926 & \textbf{0.928} \\
Type swap       & 0.453 & 0.472 & \textbf{0.981} & 0.976 \\
\bottomrule
\end{tabular}
\caption{H3 stress-test results (ROC-AUC, in-domain). Upper: retrieval ranking and calibration; lower: per-type stress tests. B3 vs.\ M1 isolates hard-negative supervision (same pooling and budget). Bold: best per row.}
\label{tab:h3}
\end{table}

Table~\ref{tab:h3} shows that hard negatives improve discrimination, not ranking.
B3 and M1 provide the cleanest isolation, differing only in whether the BCE hard-negative term is active.
B3 achieves t2d R@10 = 0.657 vs.\ M1's 0.651 ($-$0.006, effectively unchanged), while negate ROC-AUC jumps from 0.528 to 0.984 ($+$0.456) and type-swap from 0.472 to 0.976 ($+$0.504). Ranking and calibration are separable properties of the concept representation space, governed by different components of the training objective.


We conclude that hard negatives are warranted when the downstream task requires semantic discrimination. These tasks include pair scoring, similarity thresholding, and robustness to adversarial paraphrase.
For ranking-only pipelines, where the goal is Recall@K rather than a calibrated similarity score, B3 matches or exceeds M1, and the additional complexity of rule-based hard-negative generation and BCE weighting is unnecessary.

\subsection{Modifier-Family Analysis: Extensional Supervision Helps Extensional Families}

Table~\ref{tab:modifiers} breaks down NP Recall@1 by modifier type, revealing a pattern that mirrors the semantic type hierarchy in Section~\ref{sec:prelim} (see Appendix~\ref{app:modifier_types} for a plain-language description of each family).
Extensional families (intersective, subsective) allow the correct paraphrase to be recovered from proximity to the head-noun class, while relational, modal, and privative families introduce implicit arguments, possible-world reference, or head-extension negation that make head-noun neighbours actively misleading.
Privative and modal accordingly remain the hardest families (below 0.08 and 0.21 respectively) across all conditions.

B1 shows the largest intersective gain over B0 ($+$0.071 R@1) but substantially hurts relational recall ($-$0.162); no fine-tuned condition recovers relational performance.
This follows directly from the supervision signal, as synonym and definition pairs encode extensional equivalence, improving families whose $C_\tau$ is extensional (intersective, subsective) while degrading or leaving unchanged those with relational or intensional $C_\tau$.
We call this the \textbf{supervision--composition matching principle}: a concept embedding benefits from fine-tuning only when the supervision encodes the same semantic composition structure as the target concept class.
Closing the gap for relational and intensional families requires type-matched supervision, which remains an open problem.

\begin{table}[t]
\centering
\small
\setlength{\tabcolsep}{5pt}
\begin{tabular}{lrrrr}
\toprule
Modifier & \textbf{B0} & \textbf{B1} & \textbf{B3} & \textbf{M2} \\
\midrule
Intersective & 0.332 & \textbf{0.403} & 0.377 & 0.364 \\
Subsective   & 0.229 & 0.279 & 0.275 & \textbf{0.284} \\
Relational   & \textbf{0.356} & 0.194 & 0.271 & 0.181 \\
Modal        & 0.194 & 0.196 & 0.188 & \textbf{0.201} \\
Privative    & 0.065 & 0.070 & \textbf{0.077} & \textbf{0.077} \\
\midrule
Overall      & 0.235 & 0.234 & \textbf{0.240} & 0.226 \\
\bottomrule
\end{tabular}
\caption{NP paraphrase R@1 by modifier family ($n$=4,000 pool).
Bold: best per row. ROC-AUC is uniformly high (0.945--0.999) and omitted.}
\label{tab:modifiers}
\end{table}


\begin{table}[t]
\centering
\small
\setlength{\tabcolsep}{6pt}
\begin{tabular}{lrrr}
\toprule
Model & R@1 & R@10 & MRR \\
\midrule
B0 (frozen)     & 0.220 & 0.608 & 0.349 \\
B1              & \textbf{0.257} & 0.649 & 0.393 \\
M2              & 0.253 & \textbf{0.665} & \textbf{0.397} \\
\midrule
\textit{DBpedia 300-step} & \textit{0.376} & \textit{0.845} & \textit{0.550} \\
\bottomrule
\end{tabular}
\caption{Cross-domain transfer to DBpedia (245 queries, 2,849 candidates). Non-italic = WordNet/Wiktionary-only training; italic = 300 in-domain DBpedia steps (upper bound). Bold: best cross-domain.}
\label{tab:transfer}
\end{table}

The aggregate results in Table~\ref{tab:modifiers} (mechanistic breakdown in the Appendix~\ref{app:error}) confirm that extensional supervision improves intersective and subsective families ($+$0.064 and $+$0.041 R@1) while degrading relational ($-$0.162, 311 regressions) and leaving modal and privative largely unchanged, consistent with P4 and the geometric account in Section~\ref{ssec:semantic-composition}.

\subsection{Cross-Domain Transfer: P1 and P4 Generalise Beyond the Training Domain}

Table~\ref{tab:transfer} confirms both principles in a new domain.
Generic fine-tuning (B1, trained on WordNet/Wiktionary) transfers partially. DBpedia R@10 improves from 0.608 to 0.649, consistent with P1---the recalibrated geometry clusters concept-equivalent expressions more tightly regardless of domain.
But 300 steps of DBpedia-specific supervision reaches R@10 = 0.845, a further $+$0.196 gain that generic fine-tuning cannot close.
This gap is the cross-domain expression of P4, showing that WordNet/Wiktionary encodes lexicographic equivalence between synonyms and dictionary definitions, while DBpedia requires mapping natural-language queries to terse property labels (\texttt{deathPlace}, \texttt{populationTotal}) whose structure never appears in lexicographic training. We validate that P1 generalises across encoder architectures in Appendix~\ref{app:backbone}. Contrastive fine-tuning improves every tested backbone, with larger gains where the frozen baseline is weaker.


\section{Conclusion}

In this paper we investigate and identify four principles that govern concept-equivalent retrieval using sentence encoders, namely the training signal, the readout strategy, the use of hard negatives, and the match between supervision and semantic composition type.
These principles offer concrete guidance to practitioners: use a sentence-fine-tuned encoder with mean pooling, add hard negatives only when calibrated scoring matters more than ranking, and match supervision to the semantic composition structure of the target concept class.

\section*{Limitations}

\textbf{Sentence-encoder backbones.} All H1/H2 ablations use \texttt{all-mpnet-base-v2}, which is already sentence-level contrastively trained; the H2 null result may therefore be specific to already-fine-tuned backbones.
On a raw pretrained transformer, cross-layer readout may still recover meaningful signal.
\textbf{Single seed.} All conditions use one training seed.
H1 margins are large enough to be robust; H2 margins ($<$1\% absolute)
may not be stable across seeds.
\textbf{Type-matched supervision.}
The supervision--composition matching principle (Section~\ref{sec:results}) indicates the need for relation-typed and negation-aware supervision pairs, which do not currently exist at scale, so relational, modal, and privative concept types remain systematically underserved.
Constructing such resources may be a valuable next step to extend this work.


\bibliography{custom}

@article{frege1892,
  title={{\"U}ber Sinn und Bedeutung},
  author={Frege, Gottlob},
  journal={Zeitschrift f{\"u}r Philosophie und philosophische Kritik},
  volume={100},
  pages={25--50},
  year={1892}
}

@inproceedings{reimers2019sentencebert,
  title={Sentence-{BERT}: Sentence Embeddings using {S}iamese {BERT}-Networks},
  author={Reimers, Nils and Gurevych, Iryna},
  booktitle={Proceedings of EMNLP-IJCNLP 2019},
  year={2019}
}

@inproceedings{gao2021simcse,
  title={{SimCSE}: Simple Contrastive Learning of Sentence Embeddings},
  author={Gao, Tianyu and Yao, Xingcheng and Chen, Danqi},
  booktitle={Proceedings of EMNLP 2021},
  year={2021}
}

@inproceedings{sung2020biosyn,
  title={Biomedical Entity Representations with Synonym Marginalization},
  author={Sung, Mujeen and Jeon, Hwisang and Lee, Jinhyuk and Kang, Jaewoo},
  booktitle={Proceedings of ACL 2020},
  year={2020}
}

@inproceedings{liu2021sapbert,
  title={Self-Alignment Pretraining for Biomedical Entity Representations},
  author={Liu, Fangyu and Shareghi, Ehsan and Meng, Zaiqiao and Basaldella, Marco and Collier, Nigel},
  booktitle={Proceedings of NAACL 2021},
  year={2021}
}

@inproceedings{remy2022biolord,
  title={{BioLORD}: Learning Ontological Representations from Definitions for Biomedical Concepts},
  author={Remy, Franck and Demuynck, Kris and Demeester, Thomas},
  booktitle={Findings of EMNLP 2022},
  year={2022}
}

@inproceedings{jawahar2019bert,
  title={What Does {BERT} Learn about the Structure of Language?},
  author={Jawahar, Ganesh and Sagot, Beno{\^i}t and Seddah, Djam{\'e}},
  booktitle={Proceedings of ACL 2019},
  year={2019}
}

@inproceedings{carvalho2025montague,
  title={Montague Semantics and Modifier Consistency Measurement in Neural Language Models},
  author={Carvalho, Danilo S. and Manino, Edoardo and Rozanova, Julia and Cordeiro, Lucas and Freitas, Andr{\'e}},
  booktitle={Proceedings of COLING 2025},
  year={2025}
}

@inproceedings{ethayarajh2019geometry,
  title={How Contextual are Contextualized Word Representations?},
  author={Ethayarajh, Kawin},
  booktitle={Proceedings of EMNLP 2019},
  year={2019}
}

@inproceedings{nickel2017poincare,
  title={Poincar{\'e} Embeddings for Learning Hierarchical Representations},
  author={Nickel, Maximilian and Kiela, Douwe},
  booktitle={Advances in Neural Information Processing Systems 30},
  year={2017}
}

@inproceedings{ettinger2018composition,
  title={Assessing Composition in Sentence Vector Representations},
  author={Ettinger, Allyson and Elgohary, Ahmed and Phillips, Colin and Resnik, Philip},
  booktitle={Proceedings of COLING 2018},
  year={2018}
}

@inproceedings{tutubalina2020fair,
  title={Fair Evaluation in Concept Normalization: a Large-Scale Comparative Analysis for {BERT}-based Models},
  author={Tutubalina, Elena and Kadurin, Artur and Miftahutdinov, Zulfat},
  booktitle={Proceedings of COLING 2020},
  year={2020}
}

@inproceedings{shwartz2019nouncompounds,
  title={A Systematic Comparison of {E}nglish Noun Compound Representations},
  author={Shwartz, Vered},
  booktitle={Proceedings of the MWE-WN Workshop, ACL 2019},
  year={2019}
}

@article{hill2016dictionary,
  title={Learning to Understand Phrases by Embedding the Dictionary},
  author={Hill, Felix and Cho, KyungHyun and Korhonen, Anna and Bengio, Yoshua},
  journal={Transactions of the Association for Computational Linguistics},
  volume={4},
  pages={17--30},
  year={2016}
}

@inproceedings{carvalho2023definitions,
  title={Learning Disentangled Representations for Natural Language Definitions},
  author={Carvalho, Danilo S. and Mercatali, Giangiacomo and Zhang, Yingji and Freitas, Andr{\'e}},
  booktitle={Findings of EACL 2023},
  year={2023}
}

@inproceedings{tenney2019bert,
  title={{BERT} Rediscovers the Classical {NLP} Pipeline},
  author={Tenney, Ian and Das, Dipanjan and Pavlick, Ellie},
  booktitle={Proceedings of ACL 2019},
  year={2019}
}

@inproceedings{peters2018elmo,
  title={Deep Contextualized Word Representations},
  author={Peters, Matthew E. and Neumann, Mark and Iyyer, Mohit and Gardner, Matt and Clark, Christopher and Lee, Kenton and Zettlemoyer, Luke},
  booktitle={Proceedings of NAACL 2018},
  year={2018}
}

@article{rogers2020bertology,
  title={A Primer in {BERT}ology: What We Know About How {BERT} Works},
  author={Rogers, Anna and Kovaleva, Olga and Rumshisky, Anna},
  journal={Transactions of the Association for Computational Linguistics},
  volume={8},
  pages={842--866},
  year={2020}
}

@inproceedings{howard2018ulmfit,
  title={Universal Language Model Fine-Tuning for Text Classification},
  author={Howard, Jeremy and Ruder, Sebastian},
  booktitle={Proceedings of ACL 2018},
  year={2018}
}

@inproceedings{valentino2024hyperbolic,
  title={Multi-Relational Hyperbolic Word Embeddings from Natural Language Definitions},
  author={Valentino, Marco and Carvalho, Danilo and Freitas, Andr{\'e}},
  booktitle={Proceedings of EACL 2024},
  year={2024}
}

@article{mitchell2010composition,
  title={Composition in Distributional Models of Semantics},
  author={Mitchell, Jeff and Lapata, Mirella},
  journal={Cognitive Science},
  volume={34},
  number={8},
  pages={1388--1429},
  year={2010}
}

@inproceedings{baroni2010nouns,
  title={Nouns are Vectors, Adjectives are Functions: Experiments with Compositional Models of Meaning},
  author={Baroni, Marco and Zamparelli, Roberto},
  booktitle={Proceedings of EMNLP 2010},
  pages={1183--1193},
  year={2010}
}

@incollection{partee1995lexical,
  title={Lexical Semantics and Compositionality},
  author={Partee, Barbara H.},
  booktitle={An Invitation to Cognitive Science: Language},
  editor={Gleitman, Lila and Liberman, Mark},
  volume={1},
  pages={311--360},
  year={1995},
  publisher={MIT Press},
  address={Cambridge, MA}
}

\clearpage
\appendix

\section{Reproducibility}
\label{app:reproducibility}

All backbone models used in this work (\texttt{all-mpnet-base-v2}, \texttt{paraphrase-mpnet-base-v2}, \texttt{e5-base-v2}, \texttt{jina-embeddings-v5-text-nano}, and \texttt{paraphrase-MiniLM-L6}) are publicly available from the Sentence-Transformers library \citep{reimers2019sentencebert}.
Training data are drawn exclusively from WordNet and Wiktionary, both publicly available; the exact pair-generation procedure is described in the Method section and Appendix~\ref{app:data_examples}.
Full hyperparameters (learning rate, batch size, warmup, layerwise decay schedule, pooling window $K$) are listed in Appendix~\ref{app:hparams}.
The two evaluation benchmarks introduced in this work---the DBpedia semantic-gap benchmark and the modifier-labeled NP paraphrase suite---are released alongside the paper.
Training code, configuration files, and model checkpoints for all eight conditions are released publicly.\footnote{Code release is withheld during double-blind review to preserve author anonymity.}

\section{Training Data Examples}
\label{app:data_examples}

Table~\ref{tab:data_examples} illustrates the three pair types used during training.
Synonym pairs align near-synonymous terms from the same source.
Term-to-definition (t2d) pairs align a term with its dictionary definition; the headword is masked from the definition before training to prevent lexical shortcutting.
Cross-source definition--definition (d2d) pairs are generated on-the-fly by pairing definitions of the same concept across WordNet and Wiktionary.

\begin{table*}[h]
\centering
\small
\setlength{\tabcolsep}{5pt}
\begin{tabular}{llp{5.5cm}p{5.5cm}}
\toprule
Type & Source & Left & Right \\
\midrule
\multirow{3}{*}{Synonym}
  & WN  & fast          & quick \\
  & WN  & joyful        & elated \\
  & Wkt & physician     & doctor \\
\midrule
\multirow{3}{*}{t2d}
  & WN  & democracy     & a political system in which the supreme power lies in a body of citizens who can elect people to represent them \\
  & WN  & photosynthesis & the process in green plants by which carbohydrates are synthesised from carbon dioxide and water using light as an energy source \\
  & Wkt & pandemic      & a disease that affects a very large number of people throughout the world \\
\midrule
\multirow{2}{*}{d2d}
  & WN$\leftrightarrow$Wkt & a hot drink made by infusing the dried crushed leaves of the tea plant in boiling water & an aromatic beverage prepared by steeping cured leaves of the tea plant \\
  & WN$\leftrightarrow$Wkt & a state of complete mental and physical well-being & the state of being comfortable, healthy, or happy \\
\bottomrule
\end{tabular}
\caption{Representative training pair examples for each supervision type. WN = WordNet; Wkt = Wiktionary. Headwords are masked from definitions during training.}
\label{tab:data_examples}
\end{table*}


\section{Modifier Composition Types}
\label{app:modifier_types}

The five modifier families used throughout this paper originate in formal semantics \citep{partee1995lexical,carvalho2025montague}.
Table~\ref{tab:modifier_types} provides an accessible reference with concrete adjective--noun examples and plain-language descriptions of the underlying meaning mechanism, for readers less familiar with this typology.

\definecolor{intcol}{RGB}{209,231,221}
\definecolor{subcol}{RGB}{207,226,255}
\definecolor{relcol}{RGB}{255,243,205}
\definecolor{modcol}{RGB}{230,215,255}
\definecolor{privcol}{RGB}{255,218,218}

\begin{table*}[h]
\centering
\small
\setlength{\tabcolsep}{6pt}
\begin{tabular}{llp{3.5cm}p{6cm}}
\toprule
\textbf{Type} & \textbf{Example} & \textbf{Paraphrase} & \textbf{Meaning mechanism} \\
\midrule
\rowcolor{intcol}  Intersective & \emph{red car}       & \emph{a car that is red}                        & Intersection of sets: the referent is in both the modifier and head extension. \\
\rowcolor{subcol}  Subsective   & \emph{large ant}     & \emph{an ant that is large for an ant}           & Restriction within the head extension; the modifier is relative to the noun class, not absolute. \\
\rowcolor{relcol}  Relational   & \emph{electric car}  & \emph{a car powered by electricity}              & Introduces an implicit relation argument absent from the surface string; meaning depends on context. \\
\rowcolor{modcol}  Modal        & \emph{alleged thief} & \emph{someone alleged to be a thief}             & References possible worlds; the head denotation is not entailed by the compound. \\
\rowcolor{privcol} Privative    & \emph{toy gun}       & \emph{something that looks like a gun but is not a gun} & Negates the head extension: the referent is explicitly excluded from the head class. \\
\bottomrule
\end{tabular}
\caption{The five modifier composition families with adjective--noun examples and plain-language meaning descriptions. Each family requires a structurally different semantic operator $C_\tau$ (Section~\ref{sec:prelim}), which is why a single latent geometry cannot support all of them equally well under uniform supervision.}
\label{tab:modifier_types}
\end{table*}


\section{Theoretical Foundations of Conceptual Compositionality}
\label{app:theory}

This appendix formalizes the theoretical perspective used in the main text.
It states the structural assumptions under which the observed modifier-family differences, pooling null result, and fine-tuning effects follow naturally.

\subsection{Setup and Notation}

Let $(\mathcal{X}, \circ_\tau)$ denote a typed expression algebra, where $\tau$ indexes composition types such as intersective, subsective, relational, modal, and privative modification.
Let $\sem{\cdot}: \mathcal{X} \to \mathcal{C}$ be a semantic interpretation function into a concept domain $\mathcal{C}$.
Let $f: \mathcal{X} \to \mathbb{R}^d$ be a sentence encoder, and let $s: \mathbb{R}^d \times \mathbb{R}^d \to \mathbb{R}$ be the similarity score used for retrieval.
We write $x \sim y$ when $\sem{x} = \sem{y}$.

We use the $\varepsilon_\tau$-compositionality definition from \S\ref{sec:prelim}: encoder $f$ is $\varepsilon_\tau$-compositional for type $\tau$ if there exists $\Phi_\tau : \mathbb{R}^d \times \mathbb{R}^d \to \mathbb{R}^d$ such that $\| f(m \circ_\tau h) - \Phi_\tau(f(m), f(h)) \| \leq \varepsilon_\tau$ for all valid modifier--head pairs $(m,h)$ of type $\tau$.
The quantity $\varepsilon_\tau$ measures the distortion incurred when a typed semantic operator is realized in a single geometric representation space.
Small $\varepsilon_\tau$ indicates that the encoder supports the corresponding composition family; large $\varepsilon_\tau$ indicates an operator--geometry mismatch.

\subsection{Low-Distortion Homomorphism Requirement}

\begin{theorem}[Approximate Homomorphism Requirement]
Suppose retrieval over a composition family $\tau$ is stable under denotational equivalence: for any $x = m \circ_\tau h$ and any $y$ such that $x \sim y$, the representations $f(x)$ and $f(y)$ lie within radius $\delta$ of one another. Then there exists a latent operator $\Phi_\tau$ whose empirical distortion over observed compositions is at most the optimal prediction error from constituent embeddings:
\[
\sup_{(m,h)} \left\| f(m \circ_\tau h) - \Phi_\tau(f(m), f(h)) \right\| \leq \varepsilon_\tau^*,
\]
where $\varepsilon_\tau^*$ is the minimum achievable reconstruction error over operators on $\mathbb{R}^d \times \mathbb{R}^d$.
\end{theorem}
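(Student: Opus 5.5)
The plan is to show that the bound is essentially attained by construction, because $\Phi_\tau$ is an \emph{arbitrary} map $\mathbb{R}^d \times \mathbb{R}^d \to \mathbb{R}^d$ and the supremum runs only over observed modifier--head pairs. I will first make $\varepsilon_\tau^*$ precise as
\[
\varepsilon_\tau^* = \inf_{\Phi} \sup_{(m,h)} \bigl\| f(m \circ_\tau h) - \Phi(f(m), f(h)) \bigr\|,
\]
with the infimum over all operators on $\mathbb{R}^d \times \mathbb{R}^d$. The theorem then reduces to showing that this infimum is achieved by some $\Phi_\tau$. The denotational-stability hypothesis is used only to bound $\varepsilon_\tau^*$ itself.

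\textbf{Step 1 (fibres).} Partition the observed pairs by their constituent embedding $(u,v) = (f(m), f(h))$. For each such $(u,v)$ in the image, let
\[
S_{u,v} = \{ f(m \circ_\tau h) : f(m) = u,\ f(h) = v \}.
\]
Any operator contributes error only through its value at $(u,v)$, and the fibres do not interact. The global problem therefore decouples into independent problems of minimising $\sup_{z \in S_{u,v}} \|z - \Phi(u,v)\|$.

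\textbf{Step 2 (solve each fibre).} Set $\Phi_\tau(u,v)$ equal to a Chebyshev centre of $S_{u,v}$, that is, a minimiser of $c \mapsto \sup_{z \in S_{u,v}} \|z - c\|$. This function is convex, continuous and coercive whenever $S_{u,v}$ is bounded, so a minimiser exists. Outside the image, define $\Phi_\tau$ arbitrarily, say as $0$. With this choice, $\sup_{(m,h)} \|\cdot\|$ equals $\sup_{u,v} r(S_{u,v})$, where $r$ denotes the Chebyshev radius. Each $r(S_{u,v})$ is a lower bound for any competitor $\Phi$ on that fibre, so the construction is optimal and the displayed inequality holds, in fact with equality.

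\textbf{Step 3 (role of the $\delta$-hypothesis).} If colliding compositions within a fibre are denotationally equivalent, stability places them pairwise within $\delta$. The Chebyshev radius of $S_{u,v}$ is then at most $\delta$ (at most $\delta\sqrt{d/(2(d+1))}$ by Jung's theorem), giving $\varepsilon_\tau^* \le \delta$. When the family is non-extensional, collisions can occur between non-equivalent compositions, and $\varepsilon_\tau^*$ may then be large. I will remark that this is the intended reading of operator--geometry mismatch.

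\textbf{Main obstacle.} The difficulty is attainment when the observed family is infinite, or when some fibre is unbounded. In that case the supremum in Step 2 may not be achieved, or $\varepsilon_\tau^*$ may be $+\infty$. My first attempt is to assume finitely many observed pairs, which is the empirical setting, so that every $S_{u,v}$ is finite and compact. Failing that, I would assume the fibres are bounded, so that each Chebyshev centre exists by coercivity, and note that the supremum over fibres of the optimal radii is then exactly $\varepsilon_\tau^*$. If even boundedness is unavailable, I would state the result in $\eta$-approximate form: for every $\eta > 0$ some $\Phi_\tau$ achieves error at most $\varepsilon_\tau^* + \eta$.
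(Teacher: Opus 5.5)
Your proposal is correct and follows the same route as the paper: a table-lookup operator defined fibrewise on the observed input pairs $(f(m),f(h))$, with the stability hypothesis then used only to show $\varepsilon_\tau^*\le\delta$. Your version is tighter in three ways: the Chebyshev centre (rather than the paper's ``centroid'' in the averaging sense) is what actually minimizes worst-case error; you make explicit the assumption, implicit in the paper, that compositions colliding in a fibre are denotationally equivalent; and your finiteness and $\eta$-approximate fallbacks are unnecessary, since bounded fibres always admit Chebyshev centres however many fibres there are, while a single unbounded fibre makes $\varepsilon_\tau^*=+\infty$ and the inequality trivial.
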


\begin{proof}
Let $Z_\tau = \{(f(m), f(h), f(m \circ_\tau h))\}$ be the set of observed constituent--composition triples.
Define $\Phi_\tau$ as any minimizer of worst-case reconstruction error over $Z_\tau$; existence on finite support is immediate by table-lookup construction, assigning each observed input pair to a centroid of the corresponding target set.
It remains to bound $\varepsilon_\tau^*$ using the stability assumption.
For each pair $(f(m), f(h))$, the target $f(m \circ_\tau h)$ may vary across surface realizations of the same composition.
Stability under equivalence ensures all such realizations lie within a $\delta$-ball of one another: $\|f(x) - f(y)\| \leq \delta$ whenever $\sem{x} = \sem{y}$.
Any operator that maps each input pair to a representative point of the corresponding $\delta$-ball therefore achieves worst-case reconstruction error at most $\delta$ over the observed support.
Hence $\varepsilon_\tau^* \leq \delta$, establishing the bound.
Conversely, if no low-distortion $\Phi_\tau$ exists ($\varepsilon_\tau^* \gg \delta$), then the composed representations are not determined by their constituents, and expressions with identical semantic structure but different surface forms scatter to unrelated regions---contradicting retrieval stability.
\end{proof}

\paragraph{Interpretation.}
The theorem makes precise why compositional retrieval is stronger than ordinary sentence similarity: success requires not only clustering equivalent expressions, but also realizing a predictable latent operator for the relevant composition family.

\subsection{Identifiability from Equivalence Supervision}
\label{app:quotient-geometry}

\begin{theorem}[Supervision Identifiability Boundary]
If training supervision consists only of equivalence constraints of the form $x \sim y \Rightarrow f(x) \approx f(y)$, then latent operators $\Phi_\tau$ are identifiable only up to equivalence-preserving transformations. In particular, relation-typed and intensional parameters not expressed in the equivalence labels are underdetermined.
\end{theorem}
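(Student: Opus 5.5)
The plan is to formalize "identifiable only up to equivalence-preserving transformations" as a symmetry statement. First I would make the supervision explicit. Let $E = \{(x,y) : x \sim y\}$ be the set of supervised pairs. Any training objective built only from these constraints, for instance the $\mathcal{L}_{\mathrm{pos}}$ InfoNCE terms restricted to equivalence positives, then depends on $f$ only through the quantities $s(f(x), f(y))$ for $(x,y) \in E$. Negatives are arbitrary non-equivalent in-batch pairs, which also carry no type information.

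Next, call a map $T : \mathbb{R}^d \to \mathbb{R}^d$ \emph{equivalence-preserving} if $s(T u, T v) = s(u,v)$ on the relevant image. For cosine similarity, orthogonal maps qualify. More generally, any $T$ that fixes the partition of the image of $f$ into $\sim$-classes also qualifies, since it may act arbitrarily within and across classes while keeping class structure. The first step is the invariance lemma: if $f$ minimizes (or satisfies) the equivalence constraints, so does $T \circ f$.

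The second step transports operators. If $\Phi_\tau$ witnesses $\varepsilon_\tau$-compositionality for $f$, then
\[
\Phi_\tau^T(u,v) = T\,\Phi_\tau(T^{-1}u, T^{-1}v)
\]
witnesses it for $T \circ f$. When $T$ is an isometry the distortion $\varepsilon_\tau$ is unchanged, and the previous theorem (Approximate Homomorphism Requirement) still gives the bound $\varepsilon_\tau^* \le \delta$ for both encoders. So the supervision data cannot distinguish $\Phi_\tau$ from $\Phi_\tau^T$. This is the "identifiable only up to" clause.

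The third step handles the "in particular" clause, and it is the main obstacle, because it needs a perturbation that changes type-specific structure without touching any labeled constraint. I would construct it directly. Take a family $\tau$, such as relational or privative, whose operator parameters are not expressed in the labels. Pick two encoders $f, f'$ that agree on every $\sim$-class appearing in $E$. They differ only on the representation of compositions $m \circ_\tau h$ whose class has no supervised partner, or by moving a whole $\sim$-class rigidly within a region the similarity constraints do not pin down. Under such a move, $f'(m \circ_\tau h)$ can be chosen so that the best $\Phi_\tau'$ differs from $\Phi_\tau$, for example by a different implicit relation vector for relational modifiers, or by a sign flip of the head component for privatives. Because $E$ is a finite set of equality-type constraints, a dimension count shows that $\Phi_\tau$ has free degrees of freedom beyond them: the constraints fix at most $|E|$ similarity values, while $\Phi_\tau$ on $\mathbb{R}^d \times \mathbb{R}^d$ has unboundedly many. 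The type-specific parameters lie in this unconstrained complement, so they are underdetermined. Making this fully rigorous requires assuming that the labels do not encode those parameters; I would state that as the hypothesis formalizing "not expressed in the equivalence labels."
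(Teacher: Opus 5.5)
Your first two steps match the paper's proof. It too replaces $f$ by $T\circ f$ for an invertible $T$ that respects the equivalence classes, and transports the operator as $T\circ\Phi_\tau\circ(T^{-1}\times T^{-1})$. Your restriction to orthogonal $T$ is cleaner than the paper's condition (``$T(v)$ lies in the $\delta$-ball of $[x]_\sim$''), which does not by itself give $\|Tf(x)-Tf(y)\|\le\delta$.

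The real difference is the ``in particular'' clause. The paper reads it off directly from the freedom to choose $T$ arbitrarily inside class neighbourhoods. You treat it as needing a separate argument, and you are right to. Conjugation by an orthogonal $T$ is only a change of coordinates: if $\Phi_\tau(u,v)=Au+Bv+r$, the conjugate is $TAT^{-1}u+TBT^{-1}v+Tr$, so a relation vector or a head sign flip survives intact. A within-neighbourhood $T$ moves each $f(m\circ_\tau h)$ by only $O(\delta)$, which cannot change what kind of operator is realized once classes are separated by more than that. The paper's version is shorter but only establishes a gauge symmetry. Your step 3, which perturbs the encoder itself by moving composed classes relative to their constituents within the slack the constraints leave, is what actually reaches the type-specific parameters.

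Step 3 needs three repairs.
\begin{enumerate}[nosep]
\item The dimension count targets the wrong object. Values of $\Phi_\tau$ at points not of the form $(f(m),f(h))$ never enter the distortion. At points of that form they are pinned to within $\varepsilon_\tau$ of $f(m\circ_\tau h)$ once $f$ is fixed. So the free parameters must come from where the encoder places composed classes, not from $\Phi_\tau$.
\item With $\Phi_\tau$ unrestricted, the table-lookup construction in the Approximate Homomorphism Requirement fits any configuration. ``Relation vector'' or ``head sign'' therefore only has content for a parametric operator family per $\tau$. You then need one coherent move of all $\tau$-composition classes (e.g.\ a common displacement of every relational class, which changes $r$). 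That move must preserve within-class distances and the separation required of non-equivalent pairs.
\item Such a move is not a global isometry. It works under the ``constraints satisfied with slack'' reading, but it generally changes the value of an in-batch InfoNCE objective, so it does not preserve exact minimizers. For the same reason, drop the claim that any $T$ fixing the class partition qualifies. Acting arbitrarily within or across classes can break $f(x)\approx f(y)$ or the negatives, and it contradicts your own definition $s(Tu,Tv)=s(u,v)$.
\end{enumerate}

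Finally, your branch using compositions with no supervised partner is valid but holds for any finite supervision, typed or not. It is the rigid-move branch that isolates what is specific to equivalence-only labels.
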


\begin{proof}
Let $[x]_\sim = \{y \in \mathcal{X} : \sem{y} = \sem{x}\}$ denote the denotational equivalence class of $x$.
Equivalence supervision constrains $f$ only through labeled pairs: it enforces $f(x) \approx f(y)$ when $x \sim y$, and $f(x) \not\approx f(z)$ when $x \not\sim z$.
This determines the quotient geometry of $f$ over $\sim$---the relative placement of equivalence class centroids---but leaves the internal compositional structure of $f$ underdetermined.

Formally, let $T : \mathbb{R}^d \to \mathbb{R}^d$ be any invertible map satisfying: $T(v)$ lies within the $\delta$-ball of $[x]_\sim$ whenever $v = f(x)$.
Such transformations form a non-trivial family; for example, any orthogonal rotation within each class neighborhood qualifies.
The transformed encoder $\tilde{f} = T \circ f$ satisfies all equivalence constraints, since $\|\tilde{f}(x) - \tilde{f}(y)\| = \|Tf(x) - Tf(y)\| \leq \delta$ whenever $f(x)$ and $f(y)$ already lie within $\delta$ of one another.
For any operator $\Phi_\tau$ compatible with $f$, the conjugated operator
\[
\tilde{\Phi}_\tau = T \circ \Phi_\tau \circ (T^{-1} \times T^{-1})
\]
satisfies $\tilde{\Phi}_\tau(\tilde{f}(m), \tilde{f}(h)) = T(\Phi_\tau(f(m), f(h)))$, matching $\tilde{f}(m \circ_\tau h)$ whenever $\Phi_\tau$ matched $f(m \circ_\tau h)$.
Since $T$ can be chosen to permute the internal geometry arbitrarily within equivalence neighborhoods, the composition operator is identifiable only up to the family of such $T$.
In particular, whether a match arises from intersection, relation binding, modal evaluation, or privative exclusion cannot be inferred from equivalence labels alone, unless those distinctions are explicitly represented in the training signal.
\end{proof}

\paragraph{Interpretation.}
This result formalizes the supervision--composition matching principle: synonym and definition pairs identify quotient geometry over denotational equivalence, but they do not identify all typed semantic operators.

\subsection{Head-Preservation Bias}

\begin{theorem}[Geometric Head Bias]
Consider a retrieval geometry in which composed noun phrases are encouraged to remain close to their heads: $s(f(m \circ_\tau h), f(h))$ is high relative to unrelated heads. Then composition families whose denotations preserve head membership admit lower distortion under this geometry than composition families requiring head exclusion or non-local relation binding.
\end{theorem}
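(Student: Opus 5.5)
The argument compares, for each family $\tau$, the best achievable distortion $\varepsilon_\tau^*$ under the constraint that the geometry keeps composed phrases near their heads. To make this concrete, I formalize head preservation as $\|f(m\circ_\tau h) - f(h)\| \le \rho$ for all $(m,h)$, with $\rho$ small relative to typical inter-head distances. This is the distance analogue of ``$s(f(m\circ_\tau h),f(h))$ high'' under cosine scoring on the unit sphere, where $\|u-v\|^2 = 2-2s(u,v)$. Distortion is measured as in \S\ref{sec:prelim}. Since the main text ties retrieval to the score, distortion against the target is understood as the distance from $\Phi_\tau(f(m),f(h))$ to the embedding of any $y$ with $y \sim m\circ_\tau h$. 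This is the same stability radius $\delta$ used in the Approximate Homomorphism Requirement.

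\textbf{Upper bound for head-preserving families.} For intersective and subsective $\tau$ we have $\sem{m\circ_\tau h} \subseteq \sem{h}$, so a paraphrase of the compound is itself a restriction of the head class. I take the trivial operator $\Phi_\tau(u,v) = v$, optionally refined to $v + P(u)$ for a small learned correction. Two triangle-inequality steps then bound the distortion by $\rho + \delta$:
\[
\|f(y) - f(h)\| \le \|f(y) - f(m\circ_\tau h)\| + \|f(m\circ_\tau h) - f(h)\| \le \delta + \rho .
\]
The stability assumption from the first theorem in this appendix supplies the $\delta$ term.

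\textbf{Lower bound for head-excluding or relation-binding families.} For privative $\tau$, $\sem{m\circ_\tau h}\cap\sem{h} = \emptyset$. For relational and modal $\tau$, the paraphrase denotation depends on an implicit argument that is absent from $(m,h)$. I model the required discrimination, which the main text calls calibration, by a margin assumption: expressions whose denotation is disjoint from $\sem{h}$ must be embedded at distance at least $\gamma$ from $f(h)$, with $\gamma > 2\rho + \delta$.

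Head preservation places $f(m\circ_\tau h)$ within $\rho$ of $f(h)$. Its paraphrase $y$ must lie at least $\gamma$ away from $f(h)$, since $\sem{y}\cap\sem{h} = \emptyset$. Any $\Phi_\tau$ that is continuous (Lipschitz) in its second argument and agrees with the head-bias on the extensional families produces outputs near $f(h)$, so it incurs distortion at least $\gamma - \rho - \delta$. For relational and modal families I use a collision argument instead. Pick two compounds with the same $(m,h)$ constituent embeddings but different implicit relations, so that their paraphrases are separated by at least $\gamma$. Any single-valued $\Phi_\tau$ then errs by at least $\gamma/2$ on one of them. Comparing the two bounds gives $\varepsilon^*_{\text{priv/rel}} \ge \min(\gamma-\rho-\delta,\ \gamma/2) > \rho+\delta \ge \varepsilon^*_{\text{int/sub}}$, which is the claimed ordering.

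\textbf{Main obstacle.} The lower bound is not valid for arbitrary $\Phi_\tau$. A table-lookup operator, as in the proof of the first theorem, can map $(f(m),f(h))$ anywhere, so privative families become ``learnable'' on finite support. The theorem therefore only holds once the operator class is restricted. I would state the restriction explicitly: either $\Phi_\tau$ is $L$-Lipschitz with $L\rho$ small, or $\Phi_\tau$ is shared across families (a single latent geometry, as stressed in \S\ref{sec:prelim}), so that the head-bias on extensional families forces $\Phi(u,v)\approx v$. Under that restriction, the privative case reduces to the margin computation above. Without it, only the collision argument for relational and modal families survives, and the privative case has to be stated as a conditional result.
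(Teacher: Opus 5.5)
\textbf{There is a genuine gap.} Under your formalization the upper and lower bounds are mutually inconsistent, so the ordering cannot be derived from your premises.

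Your upper-bound inequality $\|f(y)-f(h)\|\le\delta+\rho$ never uses $\sem{m\circ_\tau h}\subseteq\sem{h}$. It uses only stability and the head-preservation bound $\rho$, which you impose for all $(m,h)$ of every type, and which you apply to privative compounds in the very next step. Hence $\Phi_\tau(u,v)=v$ achieves distortion at most $\rho+\delta$ on privative, relational and modal families too (at most $\rho$ under the literal definition of $\varepsilon_\tau$ in \S\ref{sec:prelim}). So no family can have $\varepsilon^*_\tau>\rho+\delta$.

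\emph{The margin hypothesis.} It has no models once a privative compound exists. For $x=m\circ_{\mathrm{priv}}h$ and $y\sim x$, the same triangle inequality gives $\|f(y)-f(h)\|\le\rho+\delta<\gamma$. Moreover, $x$ itself has denotation disjoint from $\sem{h}$, yet sits within $\rho<\gamma$ of $f(h)$. Your bound $\gamma-\rho-\delta$ is therefore derived from contradictory premises.

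\emph{The collision argument.} It fails the same way. If $f(m_1)=f(m_2)$ and $f(h_1)=f(h_2)$, both compounds lie within $\rho$ of one point. Their paraphrases are then at most $2\rho+2\delta$ apart, which forces $\gamma/2\le\rho+\delta$. If the two compounds are literally the same expression, they have one denotation, and stability caps the separation at $2\delta$.

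\emph{The operator restrictions.} The restrictions in your last paragraph rescue nothing, since $\Phi(u,v)=v$ is $1$-Lipschitz and family-independent.

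\emph{The final arithmetic.} $\gamma>2\rho+\delta$ only yields $\gamma-\rho-\delta>\rho$ and $\gamma/2>\rho+\delta/2$. You would need $\gamma>2(\rho+\delta)$.

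\textbf{What is missing.} The head bias must be set against a semantically determined target. It cannot be measured against $f$ of paraphrases, because stability ties those right back to the head-biased point $f(m\circ_\tau h)$.

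The paper's sketch does this informally. Family by family, it asks whether placing $f(m\circ_\tau h)$ near $f(h)$ is coherent with the relation between $\sem{m\circ_\tau h}$ and $\sem{h}$: inclusion for intersective and subsective, an unencoded relation argument $R$ for relational, an evidential variable for modal, and disjointness for privative. It reads distortion as the residual between the head-centred placement and what the denotation requires. This is a qualitative argument, not a quantitative bound.

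To make your version rigorous, keep $\|f(m\circ_\tau h)-f(h)\|\le\rho$, but measure error against a target embedding $t$ that respects the semantics. For example, $t$ satisfies your margin for head-disjoint denotations while letting head-preserving compounds stay near the head. Then:
\begin{itemize}
\item The head-biased placement of a privative compound is off by at least $\gamma-\rho-\|t(h)-f(h)\|$. This is essentially the paper's Privative Metric Conflict theorem.
\item Your collision argument, run against $t$ instead of $f$, bounds every single-valued $\Phi_\tau$ below by $\gamma/2$, with no restriction on the operator class.
\end{itemize}
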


\begin{proof}[Proof sketch]
The argument proceeds by examining the semantic relationship between composition and head extension for each family type.
For intersective modification, $\sem{m \circ h} = \sem{m} \cap \sem{h} \subseteq \sem{h}$: every instance of the composition is also an instance of the head.
Geometric proximity $s(f(m \circ h), f(h)) \geq \alpha$ is therefore semantically coherent---the representation should lie in the same region as the head.
For subsective modification, the composition is head-preserving in extension ($\sem{m \circ h} \subseteq \sem{h}$), so the same argument applies.
For relational modification, $\sem{m \circ h}$ depends on an implicit relation argument $R$ not encoded in either $f(m)$ or $f(h)$ alone; head proximity provides no information about $R$, so the prediction from constituent embeddings incurs additional residual.
For modal modification, the composition introduces an evidential variable indicating uncertainty about head membership; proximity to $f(h)$ conflates semantically certain and uncertain instances.
For privative modification, $\sem{m \circ_{\mathrm{priv}} h} \cap \sem{h} = \emptyset$: proximity to $f(h)$ is maximally misleading.
The distortion $\varepsilon_\tau$ therefore increases monotonically across the hierarchy intersective, subsective, relational, modal, privative, following the operator--geometry alignment.
\end{proof}

\paragraph{Interpretation.}
This theorem explains why the modifier hierarchy in the experiments is not merely empirical: the ranking follows from how well each operator family aligns with head-centered similarity.

\subsection{Privative Incompatibility under a Single Global Metric}

\begin{theorem}[Privative Metric Conflict]
Assume the encoder is consistent with respect to head instances: for all genuine instances $y \in \sem{h}$, $s(f(y), f(h)) \geq \alpha - \eta$ for small $\eta > 0$ (instances cluster near their category head).
Then no single global metric can simultaneously satisfy, for all privative constructions $m \circ_{\mathrm{priv}} h$, the following three constraints:
(i) $s(f(m \circ_{\mathrm{priv}} h),\, f(h)) \geq \alpha$ (high proximity to lexical head);
(ii) $\sem{m \circ_{\mathrm{priv}} h} \cap \sem{h} = \emptyset$ (semantic exclusion from head extension);
(iii) $s(f(m \circ_{\mathrm{priv}} h),\, f(y)) < \beta$ for all $y \in \sem{h}$, with $\beta < \alpha - \eta$ (retrieval isolation from true head instances).
\end{theorem}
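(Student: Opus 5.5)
The plan is a proof by contradiction that needs no metric geometry: exhibit a single head instance $y$ at which constraints (i) and (iii) cannot both hold. Fix any privative construction $x = m \circ_{\mathrm{priv}} h$ for which a global similarity $s$ is assumed to satisfy (i)--(iii). Constraint (ii) plays no numerical role. It is the semantic reason (iii) is imposed, since $x$ denotes nothing in $\sem{h}$, so retrieval must keep it away from every genuine head instance. The contradiction then comes from (i), (iii) and the consistency hypothesis alone.

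\textbf{Step 1: choose the witness.} I read ``$y \in \sem{h}$'' as ranging over expressions whose denotation falls within the head extension, $\sem{y} \subseteq \sem{h}$. On that reading the bare head $h$ is itself such an expression, because $\sem{h} \subseteq \sem{h}$. It is therefore an admissible witness in (iii). The consistency hypothesis $s(f(y), f(h)) \geq \alpha - \eta$ is then satisfied trivially at $y = h$, because self-similarity is maximal.

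\textbf{Step 2: derive the contradiction.} Instantiating (iii) at $y = h$ gives
\[
s\bigl(f(m \circ_{\mathrm{priv}} h), f(h)\bigr) < \beta < \alpha - \eta < \alpha .
\]
The last inequality uses $\eta > 0$. This strict upper bound contradicts (i), which requires the same quantity to be at least $\alpha$. Since $x$ was arbitrary, no single global $s$ satisfies (i)--(iii) simultaneously for any privative construction, and a fortiori not for all of them. The condition $\beta < \alpha - \eta$ is what makes the gap strict. Without it, (iii) could be vacuous relative to the consistency band.

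\textbf{Main obstacle: whether $h$ is a legitimate witness.} If the authors intend $y$ to range only over \emph{proper} instances, excluding the lexical head itself, Step 1 fails. In that case I would argue by continuity. Assume $s$ is continuous. Assume also that head-consistency holds in the strong form that instance embeddings accumulate at $f(h)$: there is a sequence $y_k \in \sem{h}$ with $f(y_k) \to f(h)$. This is the natural geometric content of ``instances cluster near their category head.'' Applying (iii) along the sequence gives $s(f(x), f(y_k)) < \beta$ for every $k$. Passing to the limit yields $s(f(x), f(h)) \leq \beta < \alpha$, which again contradicts (i).

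I would state this accumulation assumption explicitly as the precise reading of the hypothesis. A purely pointwise band $s(f(y), f(h)) \geq \alpha - \eta$ does not by itself pin $f(y)$ near $f(h)$ for a general similarity. This is exactly where a triangle-type argument would force a stronger threshold than $\beta < \alpha - \eta$, and the witness and limit routes are meant to avoid it.
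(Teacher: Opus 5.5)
\paragraph{Step 1 proves a degenerate variant, not the stated theorem.}
The statement separates the \emph{lexical head} $h$ in (i) from the \emph{genuine instances} $y \in \sem{h}$ in the consistency hypothesis and in (iii). The paper's proof depends on that separation. It fixes an arbitrary instance $y$, uses the band $s(f(y),f(h)) \geq \alpha-\eta$ to place $f(y)$ near $f(h)$, and transfers the proximity in (i) from $f(h)$ to $f(y)$ by a triangle inequality. Admitting $y=h$ requires your type shift from $\sem{y} \in \sem{h}$ (an individual in the head's extension) to $\sem{y} \subseteq \sem{h}$. Once you make that shift, (iii) at $y=h$ contradicts (i) outright. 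The consistency hypothesis and the margin $\beta<\alpha-\eta$ are then never used, and the tension between head proximity and instance isolation that the theorem is about disappears. An idle hypothesis is the symptom that the statement has been changed. Your fallback is a valid argument, but from a different premise. Continuity of $s$ plus a sequence of instances with $f(y_k) \to f(h)$ is strictly stronger than the pointwise band you were given, so it does not prove the statement as posed either.

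\paragraph{The gap cannot be closed, and the paper's proof has the same problem.}
Your closing diagnosis is right: with instances distinct from $h$ and only the pointwise band, the statement is false. Take a model with a single privative construction $x = m \circ_{\mathrm{priv}} h$ and cosine similarity. Set $f(h)=(1,0)$, $f(x)=(\cos\theta,\sin\theta)$, and embed every instance at $(\cos\phi,-\sin\phi)$, where $\cos\theta=\alpha$, $\cos\phi=\alpha-\eta$ and $0<\alpha-\eta<\alpha<1$. Then (i) and the band hold with equality, and (ii) holds by privativity. Moreover $s(f(x),f(y))=\cos(\theta+\phi)<\alpha-\eta$, so any $\beta$ strictly between these two values satisfies (iii). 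For instance, $\alpha=0.99$ and $\eta=0.01$ give $\cos(\theta+\phi)\approx 0.942$, and $\beta=0.95$ works.

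The paper's proof obtains its contradiction only by letting the triangle step lose an unquantified slack $\epsilon$. It then assumes $\eta$ and $\epsilon$ are small relative to $\alpha-\beta$, i.e.\ effectively $\beta \leq \alpha-\eta-\epsilon$, which is likewise absent from the statement. So the two routes strengthen different hypotheses. The paper keeps the pointwise band but tightens the threshold on $\beta$ by a geometry-dependent margin, and it needs a triangle inequality for the metric induced by $s$. Your limit argument needs only $\beta<\alpha$ and no triangle inequality, at the price of a qualitative accumulation assumption on the instances. If you write it up, drop the $y=h$ route. State accumulation (or a quantitative margin like the paper's) as an explicit hypothesis of the theorem, not as a reading of the given one.
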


\begin{proof}
Suppose (i) and the consistency assumption both hold.
Let $y \in \sem{h}$ be any genuine head instance.
By the consistency assumption, $f(y)$ lies within distance $\eta$ (in the metric induced by $s$) of $f(h)$.
By the triangle inequality, proximity of $f(m \circ_{\mathrm{priv}} h)$ to $f(h)$ implies proximity to $f(y)$:
\[
s\!\left(f(m \circ_{\mathrm{priv}} h),\, f(y)\right) \geq \alpha - \eta - \epsilon,
\]
where $\epsilon > 0$ is determined by the metric geometry.
For $\eta$ and $\epsilon$ small relative to the gap $\alpha - \beta > 0$, this contradicts constraint (iii), which requires $s(f(m \circ_{\mathrm{priv}} h), f(y)) < \beta < \alpha - \eta$.
Hence (i) and (iii) cannot hold simultaneously.
Relaxing (i) to allow $s(f(m \circ_{\mathrm{priv}} h), f(h)) < \alpha$ sacrifices lexical anchoring.
Since the same global score $s$ governs both, no uniform choice satisfies all three constraints without additional type-discriminative structure.
\end{proof}

\paragraph{Interpretation.}
Privative failures are therefore not just cases of insufficient training. They expose a representational conflict in ordinary embedding spaces: the model must be close to the head in one sense and far from it in another.

\subsection{Equivalence Training Reorganizes Geometry}

\begin{theorem}[Capacity-Reorganization Principle]
Let contrastive equivalence training update an encoder primarily through pairwise attraction of positives and repulsion of negatives in a fixed $d$-dimensional embedding space. Then, absent architectural expansion or new representational channels, improvements in retrieval arise from reorganization of the existing geometry rather than from increased representational capacity.
\end{theorem}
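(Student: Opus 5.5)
The plan is to make ``representational capacity'' precise, then show it is invariant under the training dynamics described in the statement. I would define capacity as the intrinsic dimension of the image of the encoder on the data distribution. Concretely this is the effective rank of the embedding covariance, bounded above by $d$, or equivalently the maximal number of linearly separable configurations realisable by linear readouts on $f(\mathcal{X}) \subseteq \mathbb{R}^d$. The first step is to note that the readout is fixed: the pooling map $g$ and the score $s$ are unchanged. Likewise the ambient dimension $d$ is fixed, since there is no architectural expansion and no new channels. So every encoder $f_{\theta'}$ reachable by training maps into the same space $\mathbb{R}^d$, and its capacity is bounded by the same quantity $d$ as $f_\theta$. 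This gives the upper half of the claim immediately: whatever changes, the ceiling does not.

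The second step treats the contrastive update itself. Under InfoNCE with normalised embeddings, the gradient with respect to each embedding is a weighted sum of differences $f(y^+) - f(x)$ and $f(z^-) - f(x)$, projected onto the tangent space of the unit sphere. I would observe that these updates move points within the span of already-present embedding directions, up to the parameter-induced Jacobian. To first order, therefore, they change the Gram matrix $G = [s(f(x_i), f(x_j))]_{ij}$, which encodes the relative placement of points. They do not change the rank bound on $G$, which is at most $d$.

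The third step argues that retrieval metrics depend only on the ordering induced by $G$. Recall@$k$ and MRR are functions of the row-wise rankings of $G$. Hence any improvement in retrieval is a change in these rankings, that is, a reorganisation of relative geometry within a rank-$\le d$ Gram matrix. This connects to the quotient-geometry view of the Supervision Identifiability Boundary theorem: equivalence supervision determines only the placement of class centroids. Training can therefore only rearrange centroids within the fixed space, which is exactly reorganisation. Anisotropy reduction, as in P1, fits this picture: it is a change of the spectrum's shape at constant rank.

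The main obstacle is the second step. The Jacobian of a deep network can in principle rotate embeddings into previously unused directions, so effective rank need not be exactly preserved. Since the ambient bound $d$ is preserved, ``capacity'' would have to be read as this hard bound rather than as effective rank. I would first try to prove the weaker, honest statement: the capacity ceiling $d$ is invariant, and gains are changes to $G$'s orderings. I would then note the empirical near-invariance of effective rank, $247.0 \to 247.2$, as evidence that the stronger effective-rank version holds in practice rather than deriving it formally.
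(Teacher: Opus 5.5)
Your proposal is correct and takes essentially the same route as the paper. Both treat the fixed output dimension $d$ as the capacity ceiling and read the training effect as a change in relative geometry: you do this through the Gram matrix and the observation that Recall@$k$ and MRR depend only on its row-wise orderings, while the paper uses the covariance and singular-value spectrum, anisotropy, and participation-ratio effective rank. In both versions the conclusion rests on reading capacity as the hard bound $d$, not on a proof that effective rank is invariant. One caution on your closing remark: the paper's own geometry table shows that effective rank is not stable under fine-tuning in general (B0-WM $177.0$ versus M1 $239.4$ under a weighted-mean readout), so $247.0\to 247.2$ should not be cited as evidence that the stronger effective-rank version holds in practice.
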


\begin{proof}
The encoder output dimension remains $d$ throughout fine-tuning, so the ambient embedding space $\mathbb{R}^d$ and its capacity are fixed.
Contrastive updates are gradient steps on pairwise cosine similarities: positive pairs receive attraction gradients and negative pairs receive repulsion gradients, modifying the weight matrices of the transformer but not adding independent output coordinates.
The effect on the embedding geometry is a change in the covariance structure of $\{f(x)\}_{x \in \mathcal{X}}$: specifically, the singular value distribution of the embedding matrix shifts.
Anisotropy (the fraction of variance captured by the leading singular direction) decreases as attraction and repulsion spread the embedding cloud more uniformly across directions.
Crucially, the effective rank (number of dimensions carrying non-negligible variance, measured for example by the participation ratio $(\sum_i \lambda_i)^2 / \sum_i \lambda_i^2$ over singular values $\lambda_i$) may remain approximately stable even as the distribution across directions changes.
Contrastive training therefore reorganizes the geometry (changing which directions carry semantic signal) without expanding it (the number of independent representational axes is bounded by $d$ before and after).
\end{proof}

\paragraph{Interpretation.}
The theorem supports the claim that concept-equivalence fine-tuning recalibrates a pre-existing semantic space rather than expanding it.

\subsection{Distortion and Operator Complexity}

\begin{theorem}[Operator Distortion Lower Bound]
Let $\kappa(\tau)$ denote the number of independent semantic degrees of freedom required to realize composition type $\tau$, including latent relation variables, modal parameters, and exclusion constraints. If these degrees of freedom are not encoded in the supervision or representation, then any realization of $\tau$ in a $d$-dimensional embedding space incurs nonzero distortion. Moreover, distortion must increase as the unsupervised degrees of freedom of $\tau$ increase relative to the information preserved by $f$.
\end{theorem}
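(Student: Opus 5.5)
The plan is to make the statement precise as an information-theoretic (fibre-counting) argument, and then derive both parts from one construction.

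\textbf{Formalising the setup.} I would model each composition type $\tau$ as follows.
\begin{itemize}
\item The denotation $\sem{m \circ_\tau h}$ depends on the constituents together with a latent parameter $r \in \mathcal{R}_\tau$: a relation variable, a modal world-set, or an exclusion constraint. Here $\mathcal{R}_\tau$ is a set or manifold of dimension $\kappa(\tau)$.
\item ``Not encoded in the supervision or representation'' is read as a condition on fibres. There exist pairs $(m,h)$ and distinct $r \neq r'$ yielding compositions $x_r, x_{r'}$ with $\sem{x_r} \neq \sem{x_{r'}}$. However, $f(m)$ and $f(h)$ carry no information distinguishing $r$ from $r'$, and the operator input $(f(m), f(h))$ is identical across the fibre.
\item The Supervision Identifiability Boundary theorem supplies this situation whenever the supervision consists only of equivalence labels: the relation-typed and intensional parameters are underdetermined.
\end{itemize}

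\textbf{Nonzero distortion.} The first claim comes from a two-point argument. Any $\Phi_\tau$ outputs a single point $\Phi_\tau(f(m), f(h))$ for the whole fibre. Retrieval requires $\|f(x_r) - f(x_{r'})\| \ge \gamma > 0$ for non-equivalent targets; this is the separation margin implicit in the problem setting, since non-equivalent items must be ranked apart. The triangle inequality then gives
\[
\max\bigl(\|f(x_r)-\Phi_\tau\|,\ \|f(x_{r'})-\Phi_\tau\|\bigr) \ge \gamma/2,
\]
so $\varepsilon_\tau \ge \gamma/2 > 0$ for every operator and every $d$.

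\textbf{Monotonicity in $\kappa(\tau)$.} I would generalise the two-point bound to a packing bound. Suppose the unsupervised part of $\mathcal{R}_\tau$ has dimension $k = \kappa(\tau) - I(f)$, where $I(f)$ counts the degrees of freedom recoverable from $(f(m), f(h))$. The fibre then contains a $\gamma$-separated family $\{f(x_r)\}$ whose size grows like $(D/\gamma)^{k}$, where $D$ is the diameter of the embedding region. A single point of $\mathbb{R}^d$ within $\varepsilon$ of all of them forces $\varepsilon \ge \tfrac12 \operatorname{diam}$ of the family. Via a Jung-type or packing inequality in the bounded (normalised) embedding sphere, the diameter of a $\gamma$-separated set of that cardinality grows with $k$. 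This yields a lower bound $\varepsilon_\tau \ge g(k)$ with $g$ nondecreasing. The Geometric Head Bias theorem can then be cited to connect the resulting ordering of $\kappa$ to the modifier hierarchy.

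\textbf{Expected obstacle.} The hardest step is the monotonicity claim. The statement is informal: ``degrees of freedom'' and ``information preserved by $f$'' are not defined. A fixed $d$ could in principle pack many separated points into a small diameter, so monotonicity of worst-case distortion is not automatic. I would first try to define $\kappa$ via the minimal number of $\gamma$-separated fibre targets, which makes monotonicity a packing-number statement. If that fails, I would state it as a bound on \emph{average} squared distortion: the centroid is optimal, and the within-fibre variance equals a sum over $k$ roughly independent latent directions, each contributing at least $\gamma^2/4$. This gives $\varepsilon_\tau^2 \gtrsim k\gamma^2/4$, which is cleanly increasing in $k$.
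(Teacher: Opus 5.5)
Your plan is essentially the paper's own argument made quantitative. The paper's sketch likewise argues three things: a latent distinction collapsed before $\Phi_\tau$ cannot be recovered by any downstream operator; each of the $k \ge \kappa(\tau) - (\text{number of supervised variables})$ collapsed binary distinctions costs an error bounded away from zero; and the tight information-theoretic bound is deferred. Your two-point bound $\varepsilon_\tau \ge \gamma/2$ and your additive fallback $\varepsilon_\tau^2 \gtrsim k\gamma^2/4$ are therefore precise versions of its two steps; the fallback, not the packing/Jung route, is what mirrors its counting argument.

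Your explicit margin $\gamma$ and your insistence on \emph{identical} rather than near-identical operator inputs genuinely sharpen the sketch. Without $\gamma$ a constant encoder has $\varepsilon_\tau = 0$, and near-identical inputs do not constrain an arbitrary (e.g.\ table-lookup) $\Phi_\tau$. The paper instead counts the cost of a collapse as retrieval error. Stating both horns --- separate the fibre and pay in $\varepsilon_\tau$, or collapse it and pay in retrieval --- is what makes ``any realization'' literally true.
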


\begin{proof}[Proof sketch]
An encoder can faithfully realize $C_\tau$ only to the extent that the $\kappa(\tau)$ semantic variables determining that operator are preserved in the representation.
If two expressions differ in a latent variable required by $C_\tau$ (a relation argument, an intensional parameter, or a privative exclusion flag) but the encoder maps them to near-identical vectors, then no downstream $\Phi_\tau$ can recover the distinction.
Each such collapsed variable represents at least one dimension of the target space that is rendered inaccessible.
By a counting argument over the required variable assignments: if $k$ independent binary distinctions collapse (each mapped to indistinguishable vectors), then at least $k$ semantic contrasts cannot be expressed at the output, and the resulting retrieval errors are bounded away from zero for each.
Since the number of collapsed variables is bounded below by $\kappa(\tau)$ minus the number of variables identified by the training signal, distortion increases monotonically as $\kappa(\tau)$ grows relative to supervised information.
Making this argument tight requires an information-theoretic lower bound on the minimum encoding error for a $d$-dimensional representation with partially-observed supervision, which we defer to future work.
\end{proof}

\paragraph{Interpretation.}
This result gives a formal reason why relational, modal, and privative families remain difficult even when extensional concept equivalence improves.

\subsection{Readout Invariance under Final-Layer Concentration}

\begin{theorem}[Readout Invariance]
Let $h_l(x) \in \mathbb{R}^d$ denote the output of transformer layer $l$ for expression $x$, and let $f_r(x) = \sum_{l=1}^L w_l\, h_l(x)$ be the pooled representation under readout $r = \{w_l\}_{l=1}^L$ with $\sum_l w_l = 1$.
Suppose semantic content concentrates in the final layer: $h_l(x) = h_L(x) + \xi_l(x)$ where $\|\xi_l(x)\| \leq \gamma$ for all $l < L$ and all $x$.
Then for any two readout strategies $r$ and $r'$,
\[
\| f_r(x) - f_{r'}(x) \| \leq 2\gamma.
\]
In particular, when $\gamma$ is small, the choice of readout has negligible effect on retrieval.
\end{theorem}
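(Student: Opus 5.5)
The plan is to compare each readout with the final-layer representation $h_L(x)$ and then combine the two comparisons with the triangle inequality. First I fix the convention $\xi_L(x) := 0$, so the concentration hypothesis reads $h_l(x) = h_L(x) + \xi_l(x)$ for every $l \in \{1,\dots,L\}$, with $\|\xi_l(x)\| \le \gamma$ for all $l$. Then, using $\sum_l w_l = 1$,
\[
f_r(x) = \sum_{l=1}^L w_l\bigl(h_L(x) + \xi_l(x)\bigr) = h_L(x) + \sum_{l=1}^{L} w_l\,\xi_l(x).
\]
This isolates the readout-dependent part as a weighted sum of the perturbations $\xi_l$.

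Second, I bound the deviation of a single readout from the final layer. By the triangle inequality, $\|f_r(x) - h_L(x)\| \le \sum_{l<L} |w_l|\,\|\xi_l(x)\| \le \gamma \sum_{l<L} |w_l|$. If the weights are nonnegative, the last sum is at most $\sum_l w_l = 1$, giving $\|f_r(x) - h_L(x)\| \le \gamma$. The same bound holds for $r'$, so
\[
\|f_r(x) - f_{r'}(x)\| \le \|f_r(x) - h_L(x)\| + \|h_L(x) - f_{r'}(x)\| \le 2\gamma.
\]
A slightly sharper route writes $f_r - f_{r'} = \sum_{l<L}(w_l - w'_l)\xi_l$ and bounds it by $\gamma\,\|w - w'\|_1 \le 2\gamma$. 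I would mention this only as a remark.

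The main obstacle is that the statement only imposes $\sum_l w_l = 1$. Without nonnegativity, $\sum_l |w_l|$ is unbounded and the $2\gamma$ bound fails. I would therefore make explicit that readouts are convex combinations, $w_l \ge 0$. This matches every readout actually used in the paper: the softmax weights $\alpha = \mathrm{softmax}(w)$ in B3/M1, the softmax gate in M2, and mean pooling of the final layer as the degenerate weight vector $e_L$.

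For general signed weights I would state the bound as $\gamma\,\|w - w'\|_1$ instead. The final sentence of the theorem then follows from a Lipschitz argument. Cosine scores of normalized vectors change by at most a constant times $\gamma / \min\|f_r(x)\|$. Hence, for $\gamma$ small relative to the margins between retrieval scores, the ranking induced by $f_r$ and $f_{r'}$ coincides.
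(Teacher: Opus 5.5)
Your proof is correct and is essentially the paper's argument. The paper takes exactly your ``sharper route'': it uses $\sum_l (w_l - w'_l) = 0$ to cancel $h_L(x)$, then bounds $\sum_l |w_l - w'_l|\,\|\xi_l(x)\| \le 2\gamma$ via the $L_1$ diameter of probability vectors. That last step tacitly requires $w_l \ge 0$, which the paper's statement omits but its proof uses, so your explicit nonnegativity assumption and your remark that the bound fails for signed weights summing to $1$ are correct.
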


\begin{proof}
\begin{align*}
\| f_r(x) - f_{r'}(x) \|
&= \left\| \sum_l (w_l - w'_l)\, h_l(x) \right\| \\
&= \left\| \sum_l (w_l - w'_l)\bigl(h_L(x) + \xi_l(x)\bigr) \right\| \\
&= \left\| h_L(x)\underbrace{\sum_l (w_l - w'_l)}_{=\,0}\right.\\
&\qquad\left. {}+ \sum_l (w_l - w'_l)\,\xi_l(x) \right\| \\
&\leq \sum_l |w_l - w'_l| \cdot \|\xi_l(x)\| \;\leq\; 2\gamma,
\end{align*}
where $\sum_l (w_l - w'_l) = 0$ because both weight vectors sum to 1, and $\sum_l |w_l - w'_l| \leq 2$ because the $L_1$ distance between two probability vectors is at most 2.
\end{proof}

\paragraph{Interpretation.}
This theorem formalizes the P2 finding that pooling strategy is a null factor in fine-tuned encoders.
Equivalence training concentrates semantic content in the final transformer layer, driving $\gamma$ toward zero: intermediate layers carry only low-variance residuals, so the choice among mean pooling, CLS, weighted mean, or gated readout produces embeddings within $2\gamma$ of one another.
Pre-fine-tuning, layer representations are more heterogeneous ($\gamma$ larger) and readout strategies diverge correspondingly.
The theorem also predicts that readout sensitivity should correlate with anisotropy: both reflect the distribution of semantic content across encoder layers and directions.

\subsection{Summary}

The theoretical picture is that sentence encoders approximate a homomorphic mapping from a typed semantic algebra into a fixed geometric space.
The seven theorems above provide formal grounding for the four empirical principles of the main paper.
\textbf{P1} (recalibration, not expansion) is supported by the Capacity-Reorganization Principle (Theorem~5). Contrastive fine-tuning changes the covariance structure of the space but not its dimension, so improvements arise from redistribution rather than augmentation.
\textbf{P2} (final-layer concentration) is supported by the Readout Invariance theorem. Once semantic content concentrates in the final layer, all linear pooling strategies converge to within $2\gamma$ of one another.
\textbf{P3} (calibration and ranking separable via hard negatives) is supported by the Approximate Homomorphism and Head-Preservation theorems (Theorems~1, 3). Stable retrieval requires a low-distortion composition operator, and head-preserving families achieve this under the existing geometry while operator-complex families require explicit negative supervision.
\textbf{P4} (supervision must match composition type) is supported by the Supervision Identifiability theorem (Theorem~2). Equivalence labels identify geometry only up to equivalence-preserving transformations; composition-type-specific operators are underdetermined without targeted supervision.
The Privative Metric Conflict (Theorem~4) and Operator Distortion Lower Bound (Theorem~6) additionally explain why privative, modal, and relational families remain hard even after equivalence fine-tuning. They require representational structure that a single global metric over synonym-definition pairs cannot provide.

\section{Training Hyperparameters}
\label{app:hparams}

Table~\ref{tab:hparams} lists the full set of hyperparameters used across all conditions.

\begin{table}[t]
\centering
\small
\setlength{\tabcolsep}{6pt}
\begin{tabular}{lr}
\toprule
Hyperparameter & Value \\
\midrule
Optimizer          & AdamW \\
Base learning rate & $2{\times}10^{-5}$ \\
LR schedule        & Linear warmup + decay \\
Warmup steps       & 100 \\
Layerwise LR decay & 0.90 per layer \\
Pooling LR multiplier & $5{\times}$ base \\
Weight decay       & 0.01 \\
Gradient clip norm & 1.0 \\
Batch size         & 16 \\
Training steps     & 8{,}000 \\
Max sequence length & 128 tokens \\
InfoNCE temperature $\tau$ & 0.05 \\
Hard-neg margin    & 0.20 \\
Layer window $K$   & 4 \\
\bottomrule
\end{tabular}
\caption{Hyperparameters shared across all fine-tuned conditions.}
\label{tab:hparams}
\end{table}

\section{Loss Variant: Unified InfoNCE}
\label{app:loss}

A simplified variant (B1-unified) folds hard negatives directly into the InfoNCE
denominator, eliminating the BCE term. Results are shown in Table~\ref{tab:unified}.
B1-unified improves retrieval ranking (t2d R@10 $+$0.012, t2d MRR $+$0.014)
but substantially reduces calibration (pair ROC-AUC $-$0.058) and hard-negative
stress-test robustness (negate ROC-AUC $-$0.182).
The BCE term provides a fixed-magnitude gradient per hard negative regardless of
batch size; the unified formulation dilutes the hard-negative signal in proportion to
batch size, reducing its effectiveness for calibration.
The primary BCE variant is preferred when downstream use requires calibrated pair
scoring or robustness to semantic perturbations.

\begin{table*}[h]
\centering
\small
\setlength{\tabcolsep}{4pt}
\begin{tabular}{lrrrrrr}
\toprule
Condition & t2d R@10 & t2d MRR & Pair AUC & HH d2t & Negate & NP R@10 \\
\midrule
B1             & 0.654 & 0.537 & \textbf{0.963} & 0.686 & \textbf{0.980} & 0.656 \\
B1-unified     & \textbf{0.666} & \textbf{0.551} & 0.905 & \textbf{0.692} & 0.798 & 0.656 \\
\bottomrule
\end{tabular}
\caption{B1 vs.\ B1-unified across the same evaluation columns as Table~\ref{tab:h1}.
Unified InfoNCE improves retrieval ranking but substantially reduces calibration
and stress-test robustness. Bold: best per column.}
\label{tab:unified}
\end{table*}

\section{Hard-Negative Rules}
\label{app:hardnegs}

Five rule-based transforms are applied to positive definitions to generate hard negatives.
The same five rules are used during both training (BCE hard-negative term) and evaluation
(stress-test ROC-AUC), so the stress tests directly probe the discriminations the model
was trained on.
Each rule is applied once per positive definition; all generated negatives are used.

\paragraph{Antonym flip.}
A key content word in the definition is identified and replaced by its antonym using
a WordNet antonymy lookup.
This tests whether the model can distinguish conceptually opposite definitions that share
the same syntactic frame
(\emph{``a sustained increase in prices''} $\to$ \emph{``a sustained decrease in prices''}).

\paragraph{Lexical negation.}
The first finite verb or copula in the definition is negated by inserting \emph{not}
(\emph{``is a warm-blooded vertebrate''} $\to$ \emph{``is not a warm-blooded vertebrate''}).
This is the most challenging type: the negative differs from the positive by a single token,
and the models must assign lower similarity to what is linguistically a denial of the
positive concept.

\paragraph{Same-POS random swap.}
The positive definition is replaced with a randomly sampled definition from a different
word sharing the same coarse POS tag (noun, verb, adjective, adverb).
This tests broad categorical discrimination: the model must not rely solely on POS-level
similarity cues to rank candidates.

\paragraph{Same-POS prefix swap.}
As above, but the replacement definition is drawn from a word sharing the same morphological
prefix (\emph{un-}, \emph{re-}, \emph{over-}, etc.).
The prefix overlap makes this harder than random same-POS swaps, as it introduces surface
cues that can mislead lexical-similarity-based models.

\paragraph{Ontological type swap.}
The definition is replaced with one from a concept of a different ontological type,
where type is drawn from a small closed set
(person, organization, location, event, artifact, biological entity, abstract concept).
This tests whether the model has encoded the coarse ontological category of a concept
rather than merely its surface distribution
(\emph{``the capital city of France''} $\to$ \emph{``a multinational technology corporation''}).

\paragraph{Evaluation procedure.}
For each test pair (query $q$, positive definition $d^+$), a negative $d^-$ is generated
by applying one rule to $d^+$.
The model scores $\cos(f(q), f(d^+))$ and $\cos(f(q), f(d^-))$;
ROC-AUC is computed over the resulting binary discrimination across all test pairs for
that rule.
A score of 1.0 indicates perfect discrimination; 0.5 is chance.

\section{Geometry Diagnostics}
\label{app:geometry}

We measure two complementary geometry properties on the in-domain split.
\textbf{Anisotropy} is the mean cosine similarity over 10,000 random pairs of
encoded texts: lower values indicate a more isotropic, uniformly spread space.
\textbf{Effective rank} is $\exp(H(\sigma))$, the exponential Shannon entropy of the
normalised singular-value spectrum: higher values indicate that more dimensions carry
meaningful variance.
Together these characterise whether the model is using the available dimensions
efficiently and without degenerate clustering.

\begin{table}[h]
\centering
\small
\setlength{\tabcolsep}{5pt}
\begin{tabular}{llrr}
\toprule
ID & Condition & Aniso. & E.\ rank \\
\midrule
B0    & Frozen mean             & 0.126 & 247.0 \\
B0-WM & Frozen wtd.\ mean      & 0.303 & 177.0 \\
B1    & Mean + HN              & \textbf{0.012} & \textbf{247.2} \\
B3    & Weighted mean          & 0.166 & 193.9 \\
M1    & Weighted mean + HN     & 0.015 & 239.4 \\
\bottomrule
\end{tabular}
\caption{Geometry diagnostics (in-domain split).
Anisotropy: mean cosine over random pairs (lower $=$ better).
Effective rank: $\exp(H(\sigma))$, entropy of the normalised eigenspectrum (higher $=$ better).
Hard negatives are the dominant geometric factor (B1 vs.\ B3).}
\label{tab:geometry}
\end{table}

\begin{figure}[h]
\centering
\includegraphics[width=\linewidth]{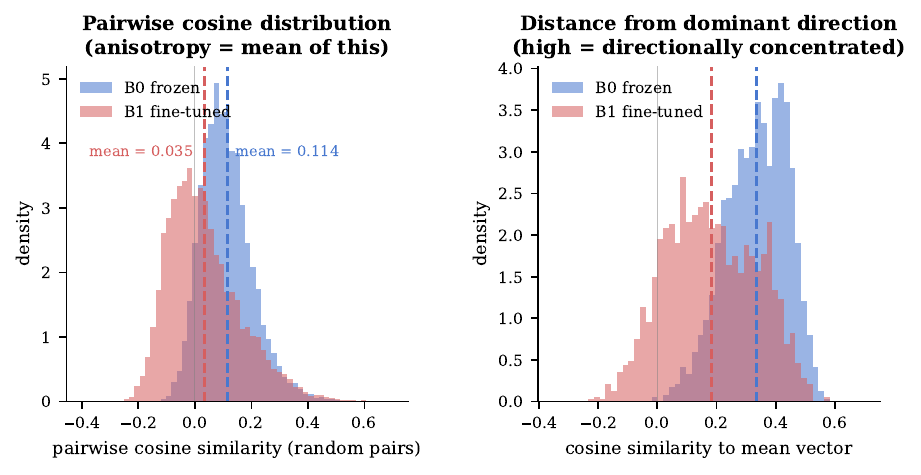}
\caption{Anisotropy and effective rank across encoder conditions. Hard-negative supervision (B1, M1) is the dominant factor driving anisotropy collapse; pooling strategy alone has a secondary effect.}
\label{fig:anisotropy_comparison}
\end{figure}

\begin{figure}[h]
\centering
\includegraphics[width=\linewidth]{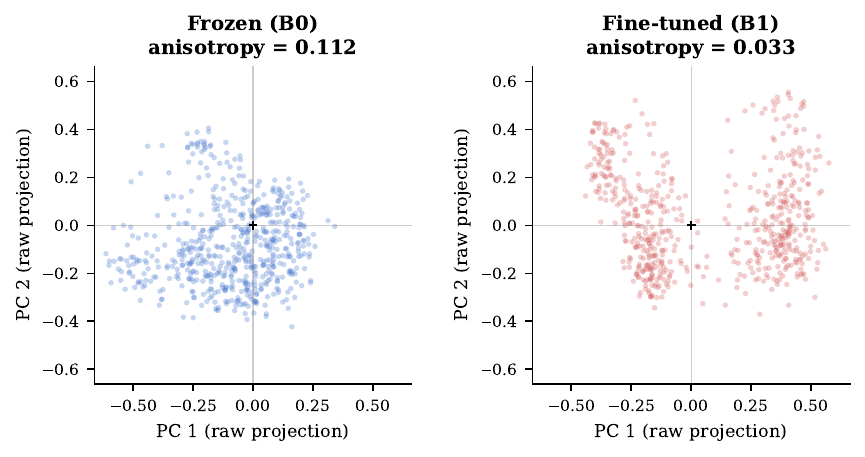}
\caption{Scatter of anisotropy vs.\ effective rank across encoder conditions. Fine-tuning with hard negatives (B1, M1) achieves low anisotropy and high effective rank simultaneously; the frozen weighted mean (B0-WM) is the geometric outlier.}
\label{fig:anisotropy_scatter}
\end{figure}

\paragraph{Why B0 already has good geometry.}
The frozen mean-pool baseline (B0) achieves surprisingly well-calibrated geometry
(anisotropy 0.126, effective rank 247.0) without any task-specific training.
This is a direct consequence of the backbone's history: \texttt{all-mpnet-base-v2}
was itself contrastively trained on 1 billion sentence pairs for sentence-level
similarity \citep{reimers2019sentencebert}, which spreads representations isotropically
and concentrates signal in the final layer.
The per-layer anisotropy plot (Figure~\ref{fig:layerwise}) makes this concrete:
layer-12 anisotropy in the frozen encoder (0.146) is far below layers 5--11 (0.45--0.50).
We inherit this structure without modification; the B0 baseline is strong not because of
anything we did, but because the backbone was already well-calibrated.

\paragraph{Why B0-WM is substantially worse.}
B0-WM (frozen uniform weighted mean over the last four layers) has anisotropy 0.303
and effective rank only 177.0---substantially worse than B0 despite using the same frozen
backbone.
The reason is straightforward: averaging layers 9--11 into the layer-12 output introduces
high-anisotropy (0.45--0.50) intermediate representations into the mixture, diluting the
well-calibrated final layer.
This is the same mechanism as layer-12 dominance, operating at the pooling level:
even without gradient flow, choosing the wrong readout is geometrically costly.
The retrieval penalty mirrors the geometry penalty---B0-WM achieves only 0.392 t2d R@10
vs.\ B0's 0.552 (Table~\ref{tab:h1}).

\paragraph{Fine-tuning reorganises, not expands.}
The most striking geometry finding is that fine-tuning on 3.3 million concept pairs
leaves effective rank essentially unchanged (B0: 247.0 $\to$ B1: 247.2) while collapsing
anisotropy (0.126 $\to$ 0.012).
Contrastive training does not recruit new dimensions; it redistributes variance across
existing ones, rotating and rescaling directions so that equivalent concepts land in the
same neighbourhood and hard negatives are pushed apart.
The representational capacity (effective rank $\approx$ 247) was already present in the frozen
encoder, contributed by the prior sentence-transformer training.
Concept-equivalence fine-tuning directs it, but cannot create it.

\paragraph{Hard negatives are the dominant geometric factor.}
The B3 vs.\ B1 comparison isolates the contribution of hard-negative supervision
(same backbone, same pooling, same budget; only the BCE hard-negative term differs):
B3 anisotropy is 0.166, effective rank 193.9; B1 achieves 0.012 and 247.2.
Hard negatives require the model to push semantically distinct but geometrically close
representations apart, which directly reduces anisotropy and expands the effective rank.
By contrast, varying the pooling architecture while holding supervision fixed
has a much smaller effect: M1 (weighted mean + HN) achieves anisotropy 0.015 and rank
239.4, only marginally below B1 (mean pool + HN) despite a different readout.
This confirms that the geometry improvement is driven by the training signal, not the
readout architecture---consistent with the H2 retrieval null result.

\section{NP Paraphrase Error Examples}
\label{app:error}

Table~\ref{tab:error} lists the three hardest B1 failures per modifier family,
sorted by rank of the correct candidate (descending).
Rank is computed over the full 4,000-item pool.
``Correct sim'' and ``Retrieved sim'' are cosine similarities to the correct
paraphrase and to the rank-1 retrieved candidate respectively.
The final column classifies the failure mode as discussed in Section~\ref{app:error}.

\begin{table*}[h]
\centering
\scriptsize
\setlength{\tabcolsep}{3pt}
\begin{tabular}{llp{3.0cm}rrp{3.0cm}rl}
\toprule
Family & Query & Correct paraphrase & Cor.\ sim & Rank & Retrieved rank-1 & Ret.\ sim & Failure mode \\
\midrule
\multirow{3}{*}{\textsc{intersective}}
  & clean plate    & free of dirt plate         & 0.487 &  83 & clean dish                           & 0.832 & \textit{adj.\ synonym} \\
  & dirty temple   & not clean temple           & 0.577 &  59 & filthy temple                        & 0.937 & \textit{adj.\ synonym} \\
  & dirty cup      & not clean cup              & 0.583 &  58 & filthy cup                           & 0.838 & \textit{adj.\ synonym} \\
\midrule
\multirow{3}{*}{\textsc{subsective}}
  & effective student  & high impact student    & 0.689 &  48 & effective teacher                    & 0.892 & \textit{head-noun confusion} \\
  & effective trainer  & high impact trainer    & 0.726 &  42 & highly effective trainer             & 0.933 & \textit{degree-modifier variant} \\
  & effective manager  & high impact manager    & 0.739 &  30 & highly effective manager             & 0.918 & \textit{degree-modifier variant} \\
\midrule
\multirow{3}{*}{\textsc{relational}}
  & community cup  & container owned by local communities & 0.253 & 864 & city cup                   & 0.899 & \textit{compound cross-retrieval} \\
  & city cup       & container associated with municipal service & 0.264 & 637 & community cup         & 0.899 & \textit{compound cross-retrieval} \\
  & summer account & profile related to the warm season & 0.335 & 580 & student account            & 0.771 & \textit{compound cross-retrieval} \\
\midrule
\textsc{modal}
  & alleged culprit    & not proven culprit     & 0.633 &  94 & purported culprit$^\dagger$          & 0.929 & \textit{annotation limit} \\
\midrule
\multirow{3}{*}{\textsc{privative}}
  & toy ticket         & pass lookalike         & 0.347 & 497 & play ticket$^\ddagger$               & 0.807 & \textit{annotation limit} \\
  & fraudulent account & profile lookalike      & 0.349 & 470 & forged account                       & 0.912 & \textit{adj.\ synonym} \\
  & spurious ticket    & pass lookalike         & 0.372 & 437 & bogus ticket                         & 0.927 & \textit{adj.\ synonym} \\
\bottomrule
\end{tabular}
\caption{Hardest B1 failures per modifier family (up to 3; 4,000-item pool). Rank = position of correct paraphrase; lower = harder failure.
Failure modes: \textit{adj.\ synonym} = same-adjective surface variant ranked above definitional paraphrase; \textit{head-noun confusion} = correct modifier, wrong noun; \textit{degree-modifier variant} = degree form ranked above definition; \textit{compound cross-retrieval} = adjacent compound preferred over relational-clause paraphrase; \textit{annotation limit} = retrieved item is a valid paraphrase under multi-reference annotation.
$\dagger$~\emph{purported culprit} is a valid paraphrase of \emph{alleged culprit}; counted wrong due to single-reference annotation.
$\ddagger$~\emph{play ticket} is a valid paraphrase of \emph{toy ticket}; counted wrong because the annotated reference is \emph{pass lookalike}.}
\label{tab:error}
\end{table*}

\section{Backbone Grid on DBpedia}
\label{app:backbone}

Table~\ref{tab:backbone} reports R@10 and MRR for all five backbones on DBpedia (245 test queries), both frozen and after 300 in-domain fine-tuning steps.

\begin{table}[h]
\centering
\small
\setlength{\tabcolsep}{4pt}
\begin{tabular}{lrr rr}
\toprule
 & \multicolumn{2}{c}{Frozen} & \multicolumn{2}{c}{Fine-tuned} \\
\cmidrule(lr){2-3}\cmidrule(lr){4-5}
Backbone & R@10 & MRR & R@10 & MRR \\
\midrule
\texttt{all-mpnet-base-v2}   & 0.608 & 0.349 & \textbf{0.845} & \textbf{0.550} \\
\texttt{jina-v5-nano}        & \textbf{0.784} & \textbf{0.431} & 0.829 & 0.468 \\
\texttt{e5-base-v2}          & 0.551 & 0.299 & 0.816 & 0.519 \\
\texttt{para-mpnet-base-v2}  & 0.624 & 0.307 & 0.816 & 0.508 \\
\texttt{para-MiniLM-L6}      & 0.539 & 0.248 & 0.678 & 0.360 \\
\bottomrule
\end{tabular}
\caption{Backbone grid on DBpedia (245 test queries, 2,849 candidates).
All fine-tuned models use 300 DBpedia-specific steps.
Bold: best per column. Contrastive fine-tuning improves every backbone tested.}
\label{tab:backbone}
\end{table}

\section{Use of AI Writing Assistance}

During the preparation of this work the author(s) used Claude (Anthropic) in order to assist with experiment implementation (coding), writing, editing, and LaTeX preparation of the manuscript. After using this tool, the author(s) reviewed and edited the content as needed and take(s) full responsibility for the content of the publication.

\end{document}